\newcommand{\bproof}{\begin{proof}}
\newcommand{\eproof}{\end{proof}}
\begin{document}
\bibliographystyle{plainnat}
\makeatletter

\renewcommand{\theequation}{\thesection.\arabic{equation}}
\numberwithin{equation}{section}
\renewcommand{\bar}{\overline}
\newtheorem{theorem}{Theorem}[section]
\newtheorem{question}{Question}[section]
\newtheorem{proposition}{Proposition}[section]
\newtheorem{lemma}{Lemma}[section]
\newtheorem{corollary}{Corollary}[section]
\newtheorem{definition}{Definition}[section]
\newtheorem{problem}{\em Problem}[section]
\newtheorem{remark}{Remark}[section]
\newtheorem{example}{Example}[section]
\newtheorem{case}{Case}[section]
\newtheorem{assumption}{Assumption}[section]
\iffalse
\newtheorem{theorem}{Theorem}[section]
\newtheorem{lemma}[theorem]{Lemma}
\newtheorem{proposition}[theorem]{Proposition}
\newtheorem{corollary}[theorem]{Corollary}
\newtheorem{definition}[theorem]{Definition}
\newtheorem{assumption}[theorem]{Assumption}
\fi

%\def\bproof{{\noindent\sc Proof.\quad}}
%\def\eproof{{\mbox{}\hfill\qed}\medskip}
%\newcommand\qed{{\unskip\nobreak\hfil\penalty50\hskip2em\vadjust{} \nobreak\hfil$\Box$\parfillskip=0pt\finalhyphendemerits=0\par}}

\renewcommand\proofname{\bf Proof} %Change proofname to bold
\def\eop{$\rule{1.3ex}{1.3ex}$}
\renewcommand\qedsymbol\eop  % To end a proof by a black square
\numberwithin{equation}{section}
\makeatletter

%%%%%%%
% Local definitions
%%%%%%%
\newcommand{\XXX}{{\bf XXX~}}
\newcommand{\beq}{\begin{equation}} \newcommand{\eeq}{\end{equation}}
\newcommand{\bz}{{\bf z}}
\newcommand{\bx}{{\bf x}}
\newcommand{\bt}{{\bf t}}
\newcommand{\bi}{\begin{itemize}}
\newcommand{\be}{\begin{enumerate}}
\newcommand{\ei}{\end{itemize}}
\newcommand{\ee}{\end{enumerate}}
\newcommand{\calH}{{\cal H}}
\newcommand{\E}{{\mathbb E}}
\newcommand{\Om}{{\Theta}}
\newcommand{\om}{{\theta}}
\newcommand{\gam}{{\gamma}}
\newcommand{\Gam}{{\Gamma}}
\newcommand{\homega}{{\hat \om}}
\newcommand{\bS}{{\mathbb S}}
\newcommand{\R}{{\mathbb R}}
\newcommand{\N}{{\mathbb N}}
\newcommand{\calE}{{\cal E}}
\newcommand{\calG}{{\cal G}}
\newcommand{\calV}{{\cal V}}
\newcommand{\calK}{{\cal K}}
\newcommand{\calN}{{\cal N}}
\newcommand{\calU}{{\cal U}}
\newcommand{\calT}{{\cal T}}
\newcommand{\calY}{{\cal Y}}
\newcommand{\calO}{{\cal O}}
\newcommand{\calX}{{\cal X}}
\newcommand{\calW}{{\cal W}}
\newcommand{\W}{{\cal W}}
\newcommand{\G}{{\cal G}}
\newcommand{\K}{{\cal K}}
\newcommand{\OO}{{\bf O}}
\newcommand{\hK}{{\hat K}}
\newcommand{\X}{{\cal X}}
\newcommand{\M}{{\cal M}}
\newcommand{\KG}{{\cal K(\calG)}}
\newcommand{\lam}{{\lambda}}
\newcommand{\calM}{{\cal M}}
\newcommand{\calA}{{\cal A}}
\newcommand{\calB}{{\cal B}}
\newcommand{\calL}{{\cal L}}
\newcommand{\calD}{{\cal D}}
\newcommand{\calR}{{\cal R}}
\newcommand{\pp}{{\cal P}}
\newcommand{\hc}{{\hat c}}
\newcommand{\ck}{{c_K}}
\newcommand{\hL}{{\hat L}}
\newcommand{\tL}{{\bar L}}
\newcommand{\sK}{{\SSS K}}
\newcommand{\hg}{{g_K}}
\newcommand{\tf}{{f_K}}
\newcommand{\hy}{{y_K}}
\newcommand{\haty}{{\hat y}}
\newcommand{\hG}{{\hat \Gam}}
\newcommand{\vt}{{\vec t}}
\newcommand{\vv}{{\vec v}}
\newcommand{\lb}{{\langle}}
\newcommand{\rb}{{\rangle}}
\newcommand{\by}{{\bf y}}
\newcommand{\btau}{{\bf \tau}}
\newcommand{\bu}{{\bf u}}
\newcommand{\bv}{{\bf v}}
\newcommand{\tby}{\tilde{{\bf y}}}
\newcommand{\Sb}{{\bf S}}
\newcommand{\Mb}{{\bf M}}
\newcommand{\Ob}{{\bf O}}
\newcommand{\SSS}{\scriptscriptstyle}
\def\boldf#1{\hbox{\rlap{$#1$}\kern.4pt{$#1$}}}
\newcommand{\balpha}{{\boldf \alpha}}
\newcommand{\wh}{\hat w}
\newcommand{\Wh}{\hat W}
\newcommand{\wb}{\bar w}
\newcommand{\Wb}{\bar W}
\newcommand{\xb}{\bar x}
\newcommand{\cb}{\bar c}
\newcommand{\trans}{^{\scriptscriptstyle \top}}
\newcommand{\tW}{\tilde{W}}
\newcommand{\tw}{\tilde{w}}
\newcommand{\hbeta}{{\hat \beta}}

\newcommand{\figsheight}{4.0cm}
\renewcommand\baselinestretch{1}

\mathtoolsset{showonlyrefs,showmanualtags}

\begin{titlepage}
\advance\topmargin by 0.5in
\begin{center}

\vspace{1.5truecm} {\Large Taking Advantage of Sparsity in Multi-Task Learning}

\vspace{1.2truecm}

\end{center}

\begin{center}

{\bf Karim Lounici}$^{(1)}$
\\ \vspace{.2truecm}
{\bf Massimiliano Pontil}$^{(1)}$
\\ \vspace{.2truecm}
{\bf Alexandre B. Tsybakov}$^{(1)}$ 
\\ \vspace{.2truecm}
{\bf Sara van de Geer}$^{(3)}$

\vspace{1.0truecm}

\noindent (1) LPMA and CREST \\
3, Av. Pierre Larousse, \\
92240 Malakoff, France \\
{\em \{karim.lounici,alexandre.tsybakov\}@ensae.fr} \\ 

\vspace{.75truecm}

\noindent (2) Department of Computer Science \\
University College London \\
Gower Street, London WC1E, England, UK \\
E-mail: {\em m.pontil@cs.ucl.ac.uk}

\vspace{.75truecm}

\noindent (3) Seminar f\"ur Statistik, ETH Z\"urich \\
LEO D11, 8092 Z\"urich \\
Switzerland \\
{\em geer@stat.math.ethz.ch}

\vspace{.75truecm}

\begin{center}
February 13, 2009
\end{center}

\end{center}

\vspace{.5truecm}

\begin{abstract}
{\noindent We study the problem of estimating multiple linear
regression equations for the purpose of both prediction and variable
selection. Following recent work on multi-task learning
\cite{AEP}, we assume that the regression vectors share the same
sparsity pattern. This means that the set of relevant predictor
variables is the same across the different equations. This assumption
leads us to consider the Group Lasso as a candidate estimation
method. We show that this estimator enjoys nice sparsity oracle
inequalities and variable selection properties. The results hold 
under a certain restricted eigenvalue condition and a coherence
condition on the design matrix, which naturally extend recent work in
\cite{BRT,lounici2008snc}. In particular, in 
the multi-task learning scenario, in which the number of tasks can
grow, we are able to remove completely the effect of the number of
predictor variables in the bounds. Finally, we show how our results can be extended to more general noise
distributions, of which we only require the variance to be finite.
%Finally, we show how our results
%can be extended to more general noise distributions, of which we only
%require the fourth moment to be finite.
}
\end{abstract}
\end{titlepage}

\bibliographystyle{plain}
\renewcommand{\theequation}{\thesection.\arabic{equation}}
\numberwithin{equation}{section}
\renewcommand{\bar}{\overline}

\section{Introduction}
We study the problem of estimating multiple regression equations under
sparsity assumptions on the underlying regression coefficients. More precisely,
we consider multiple Gaussian regression models,
\begin{equation}
\begin{array}{lcl}
y_1 & = & X_1 \beta^*_1 + W_1 \\
y_2 & = & X_2 \beta^*_2 + W_2 \\
~ & \vdots & ~\\
y_T & = & X_T \beta^*_T + W_T
\end{array}
\label{eq:s1}
\end{equation}
where, for each $t = 1,\dots,T$, we let $X_t$ be a prescribed $n
\times M$ design matrix, $\beta_t^*$ the unknown vector of
regression coefficients and $y_t$ an $n$-dimensional vector of
observations.  We assume that $W_1,\dots,W_T$ are {\em i.i.d.} zero
mean random vectors.

We are interested in estimation methods which work well even when the
number of parameters in each equation is much larger than the number
of observations, that is, $M\gg n$. This situation may arise in many
practical applications in which the predictor variables are inherently
high dimensional, or it may be ``costly'' to observe response
variables, due to difficult experimental procedures, see, for example
\cite{AEP} for a discussion. 

Examples in which this estimation problem is relevant range from
multi-task learning
\cite{AEP,CCG_08,maurer,obozinski2008usr} and conjoint
analysis (see, for example, \cite{EPT,lenk} and
references therein) to longitudinal data analysis
\cite{diggle2002ald} as well as the analysis of panel data
\cite{hsiao2003apd,wooldridge2002eac}, among others. In particular,
multi-task learning provides a main motivation for our study. In
that setting each regression equation corresponds to a different
learning task (the classification case can be treated similarly); in
addition to the requirement that $M \gg n$, we are also interested
in the case that the number of tasks $T$ is much larger than $n$.
Following \cite{AEP} we assume that there are only few common
important variables which are shared by the tasks. A general goal of
this paper is to study the implications of this assumption from a
statistical learning view point, in particular, to quantify the
advantage provided by the large number of tasks to learn both the
underlying vectors $\beta^*_1,\dots,\beta^*_T$ as well as to select
common variables shared by the tasks.

%During the recent years, there has been a lot of interest on techniques for {\em sparse estimation} (or sparse recovery)
Our study pertains and draws substantial ideas from the recently
developed area of compressed sensing and sparse estimation (or
sparse recovery), see \cite{BRT,candes2007dss,donoho2006srs} and
references therein. A central problem studied therein is that of
estimating the parameters of a (single) Gaussian regression model.
Here, the term ``sparse'' means that most of the components of the
underlying $M$-dimensional regression vector are equal to zero. A
main motivation for sparse estimation comes from the observation
that in many practical applications $M$ is much larger than the
number $n$ of observations but the underlying model is
(approximately) sparse, see \cite{candes2007dss,donoho2006srs} and
references therein. Under this circumstance ordinary least squares
will not work. A more appropriate method for sparse estimation is
the $\ell_1$-norm penalized least squares method, which is commonly
referred to as the Lasso method. In fact, it has been recently shown
by different authors, under different conditions on the design
matrix, that the Lasso satisfies sparsity oracle inequalities, see
\cite{BRT,BTWAnnals07,bunea2007soi,vandegeer2008hdg} and references
therein. Closest to our study in this paper is \cite{BRT}, which
relies upon a Restricted Eigenvalue (RE) assumption. The results of
these works make it possible to estimate the parameter $\beta$ even
in the so-called ``{\em p much larger than n}" regime (in our
notation, the number of predictor variables $p$ corresponds to
$MT$).

In this paper, we assume that the vectors
$\beta_1^*,\dots,\beta^*_T$ are not only sparse but also have the same
sparsity pattern. This means that the set of indices which correspond
to non zero components of $\beta_t^*$ is the same for every
$t=1,\dots,T$. In other words, the response variable associated with
each equation in \eqref{eq:s1} depends only on a small subset (of size
$s\ll M$) of the corresponding predictor variables and the set of
relevant predictors is preserved across the different equations. This
assumption, that we further refer to as {\it structured sparsity
assumption}, is motivated by some recent work on multi-task learning
\cite{AEP}. It naturally leads to an extension of the Lasso
method, the so-called group Lasso
\cite{group_lasso}, in which the error term is the average residual
error across the different equations and the penalty term is a mixed
$(2,1)$-norm. The structured sparsity assumption induces a
relation between the responses and, as we shall see, can be used to
improve estimation.

The paper is organized as follows. In Section \ref{sec:2} we define
the estimation method and comment on previous related work. In
Section \ref{sec:3} we study the oracle properties of this estimator
when the errors $W_t$ are Gaussian. Our main results concern upper
bounds on the prediction error and the distance between the
estimator and the true regression vector $\beta^*$. Specifically,
Theorem \ref{th1} establishes that under the above structured
sparsity assumption on $\beta^*$, the prediction error is
essentially of the order of $s/n$. In particular, in the multi-task
learning scenario, in which $T$ can grow, we are able to remove
completely the effect of the number of predictor variables in the
bounds. Next, in Section \ref{sec:4}, under a stronger condition on
the design matrices, we describe a simple modification of our method
and show that it selects the correct sparsity pattern with an
overwhelming probability (Theorem \ref{est-supnorm}). We also find
the rates of convergence of the estimators for mixed $(2,1)$-norms
with $1\le p\le \infty$ (Theorem \ref{Signnonasymp}). The techniques
of proofs build upon and extend those of \cite{BRT} and
\cite{lounici2008snc}. Finally, in Section \ref{sec:5} we
discuss how our results can be extended to more general noise
distributions, of which we only require the variance to be finite.
%Finally, in Section \ref{sec:5} we discuss
%how our results can be extended to more general noise distributions,
%of which we only require the fourth moment to be finite.

\section{Method and related work}
\label{sec:2}

In this section we first introduce some notation and then describe
the estimation method which we analyze in the paper. As stated
above, our goal is to estimate $T$ linear regression functions
identified by the parameters $\beta_1^*,\dots,\beta_T^* \in \R^M$.
We may write the model \eqref{eq:s1} in compact notation as \beq y =
X \beta^{*} + W \label{eq:model} \eeq where $y$ and $W$ are the
$nT$-dimensional random vectors formed by stacking the vectors
$y_1,\dots,y_T$ and the vectors $W_1,\dots,W_T$, respectively.
Likewise $\beta^*$ denotes the vector obtained by stacking the
regression parameter vectors $\beta^*_1,\dots,\beta^*_T$. Unless
otherwise specified, all vectors are meant to be column vectors.
Thus, for every $t \in \N_T$, we write $y_t=(y_{ti}: i \in
\N_n)^\top$ and $W_t=(W_{ti}: i \in \N_n)^\top$, where, hereafter,
for every positive integer $k$, we let $\N_k$ be the set of integers
from $1$ and up to $k$. The $nT \times MT$ block diagonal design
matrix $X$ has its $t$-th block formed by the $n \times M$ matrix
$X_t$. We let $x_{t1}^\top,\dots,x_{tn}^\top$ be the row vectors
forming $X_t$ and $(x_{ti})_j$ the $j$-th component of the vector
$x_{ti}$. Throughout the paper we assume that $x_{ti}$ are
deterministic.

For every $\beta \in \R^{MT}$ we introduce $(\beta)^j \equiv \beta^j
= (\beta_{tj}: t \in \N_T)^\top$, that is, the vector formed by the
coefficients corresponding to the $j$-th variable. For every $1\le p
<\infty$ we define the mixed $(2,p)$-norm of $\beta$ as
$$\|\beta\|_{2,p} =
\left(\sum_{j=1}^M \left(
\sum_{t=1}^T \beta_{tj}^2\right)^\frac{p}{2} 
\right)^\frac{1}{p} = \left(\sum_{j=1}^M
\|\beta^j\|^p\right)^\frac{1}{p} 
$$
and the $(2,\infty)$-norm of $\beta$ as
$$
\|\beta\|_{2,\infty} = \max_{1 \leq j \leq M} \|\beta^j\|,
$$
where $\|\cdot\|$ is the standard Euclidean norm.

If $J \subseteq \N_M$ we let $\beta_J \in \R^{MT}$ be the vector
formed by stacking the vectors $(\beta^j I\{j \in J\} : j \in \N_M)$,
where $I\{\cdot\}$ denotes the indicator function. Finally we set
$J(\beta) = \{j:\beta^j \neq 0,~j \in \N_M\}$ and $M(\beta) =
|J(\beta)|$ where $|J|$ denotes the cardinality of set $J\subset
\{1,\dots,M\}$. The set $J(\beta)$ contains the indices of the relevant 
variables shared by the vectors $\beta_1,\dots,\beta_T$ and the number
$M(\beta)$ quantifies the level of structured sparsity across those
vectors.

We have now accumulated the sufficient information to introduce the
estimation method. We define the empirical residual error $$ {\hat
S}(\beta)= \frac{1}{nT} \sum_{t=1}^T \sum_{i=1}^n (x_{ti}\trans
\beta_t - y_{ti})^2 = \frac{1}{nT} \|X \beta - y\|^2 $$ and, for every
$\lambda > 0$, we let our estimator $\hbeta$ be a solution of the
optimization problem \cite{AEP} \beq \min_\beta {\hat S}(\beta) +
2
\lambda \|\beta\|_{2,1}. \label{eq:opt} \eeq

%????? See also \cite{obozinski2008usr,yuan2007dra} for related
%models.????? Do we need this sentence? Looks not logic here.

In order to study the statistical properties of this estimator, it
is useful to derive the optimality condition for a solution of the
problem \eqref{eq:opt}. Since the objective function in
\eqref{eq:opt} is convex, $\hbeta$ is a solution of \eqref{eq:opt}
if and only if $0$ (the $MT$-dimensional zero vector) belongs to the
subdifferential of the objective function. In turn, this condition
is equivalent to the requirement that $$ -\nabla {\hat S}(\hbeta)
\in 2 \lambda
\partial \left(\sum_{j=1}^M \|\hbeta^{j}\| \right),  $$
where $\partial$ denotes the subdifferential (see, for example,
\cite{BorLew} for more information on convex analysis). Note that
\begin{align*}
\partial \left(\sum_{j=1}^M \|\beta^{j}\| \right) =
\bigg\{ \theta \in \R^{MT}: \theta^j =
\frac{\beta^j}{\|\beta^j\|}~{\rm if}~ \beta^j \neq 0,
\\
~
%\left.
\|\theta^j\| \leq 1,~{\rm if}~ \beta^j = 0,~j \in \N_M
%\right\}.
\bigg\}.
\end{align*}
Thus, $\hbeta$ is a solution of \eqref{eq:opt} if and only if
\begin{eqnarray}
\label{eq:sol-n0}
\hspace{-.3truecm}\frac{1}{nT} (X\trans (y - X \hbeta))^j &
\hspace{-.3truecm}= \hspace{-.3truecm} &
\lambda \frac{\hbeta^j}{\|\hbeta^j\|},~~~~\mbox{if}~\hbeta^j \neq 0 \\
\label{eq:sol-n2}
 \hspace{-.3truecm}\frac{1}{nT} \|(X\trans (y - X
\hbeta))^j\| & \hspace{-.3truecm} \leq \hspace{-.3truecm} &
\lambda,~~~~~~~~~~~~~\mbox{if}~ \hbeta^j = 0.
%\label{eq:sol-n1}
\end{eqnarray}

Finally, let us comment on previous related work. Our estimator is a
special case of the group Lasso estimator \cite{group_lasso}.  Several
papers analyzing statistical properties of the group Lasso appeared
quite recently
\cite{bach07,ChesHeb07,Horowitz08,koltch_y08,MeierGeerBuhlm06,MGB08,NarRin08,SPAM}. Most
of them are focused on the group Lasso for additive models
\cite{Horowitz08,koltch_y08,MGB08,SPAM} or generalized linear models
\cite{MeierGeerBuhlm06}. Special choice of groups is studied in
\cite{ChesHeb07}. Discussion of the group Lasso in a relatively
general setting is given by Bach \cite{bach07} and Nardi and Rinaldo
\cite{NarRin08}. Bach \cite{bach07} assumes that the predictors
$x_{ti}$ are random with a positive definite covariance matrix and
proves results on consistent selection of sparsity pattern
$J(\beta^*)$ when the dimension of the model ($p=MT$ in our case) is
fixed and $n\to\infty$. Nardi and Rinaldo \cite{NarRin08} consider a
setting that covers ours and address the issue of sparsity oracle
inequalities in the spirit of \cite{BRT}. However, their bounds are
too coarse (see comments in Section \ref{sec:3} below). Obozinski et
al.~\cite{obozinski2008usr} replace in (\ref{eq:opt}) the
$(2,1)$-norms by $(q,1)$-norms with $q>1$ and show that the
resulting estimator achieves consistent selection of the sparsity
pattern under the assumption that all the rows of matrices $X_t$ are
independent Gaussian random vectors with the same covariance matrix.

This literature does not demonstrate theoretical advantages of the
group Lasso as compared to the usual Lasso. One of the aims of this
paper is to show that such advantages do exist in the multi-task
learning setup. In particular, our Theorem \ref{th1} implies that
the prediction bound for the group Lasso estimator that we use here
is by at least a factor of $T$ better than for the standard Lasso
under the same assumptions. Furthermore, we demonstrate that as the
number of tasks $T$ increases the dependence of the bound on $M$
disappears, provided that $M$ grows at the rate slower than
$\exp({\sqrt{T}})$.

\section{Sparsity oracle inequality}
\label{sec:3}

Let $1\le s \le M$ be an integer that gives an upper bound on the
structured sparsity $M(\beta^*)$ of the true regression vector
$\beta^*$. We make the following assumption. 
\begin{assumption}
\label{RE}
%[Assumption RE($s$)]
There exists a positive number $\kappa=\kappa(s)$ such that
\begin{align*}
\min \bigg\{ \frac{\sqrt{\Delta\trans X\trans X
\Delta}}{\sqrt{n}\|\Delta_J\|}~:~ |J| \leq s, \Delta\in
\R^{MT}\setminus \{0\}, \\
 \,\|\Delta_{J^c}\|_{2,1} \leq 3 \|\Delta_J\|_{2,1} \bigg\}
\geq \kappa, \end{align*}\label{ass} where $J^c$ denotes the
complement of the set of indices~$J$.
\end{assumption}

To emphasize the dependency of Assumption \ref{RE} on $s$, we will
sometimes refer to it as Assumption RE($s$). This is a natural
extension to our setting of the Restricted Eigenvalue assumption
for the usual Lasso and Dantzig selector from \cite{BRT}. The $\ell_1$
norms are now replaced by the mixed (2,1)-norms. Note that, however,
the analogy is not complete. In fact, the sample size $n$ in the usual
Lasso setting corresponds to $nT$ in our case, whereas in Assumption
\ref{RE} we consider $\sqrt{\Delta\trans X\trans X
\Delta/n}$ and not $\sqrt{\Delta\trans X\trans X \Delta/(nT)}$. This
is done in order to have a correct normalization of $\kappa$ without
compulsory dependence on $T$ (if we use the term $\sqrt{\Delta\trans
X\trans X \Delta/(nT)}$ in Assumption \ref{RE}, then $\kappa\sim
T^{-1/2}$ even in the case of the identity matrix $X\trans X/n$).
%This is a main point in our analysis and, as we shall see, it allows
%us to get improved rates in comparison to the standard group lasso.

Several simple sufficient conditions for Assumption \ref{RE} with
$T=1$ are given in \cite{BRT}. Similar sufficient conditions can be
stated in our more general setting. For example, it is enough to
suppose that each of the matrices $X_t\trans X_t/n$ is positive
definite or satisfies a Restricted Isometry condition as in
\cite{candes2007dss} or the coherence condition (cf. Lemma
\ref{lem:2} below).

\begin{lemma}\label{lem:1}
Consider the model \eqref{eq:s1} for $M \geq 2$ and $T,n \geq 1$.
Assume that the random vectors $W_1,\dots,W_T$ are i.i.d. Gaussian
with zero mean and covariance matrix $\sigma^2 I_{n \times n}$, all
diagonal elements of the matrix $X\trans X/n$ are equal to $1$ and
$M(\beta^*) \leq s$. Let
$$
\lambda = \frac{2\sigma}{\sqrt{nT}}\left(1 + \frac{A\log
M}{\sqrt{T}}\right)^{1/2},
$$
where $A > 8$ and let $q = \min(8\log M, A\sqrt{T}/8)$. Then with probability at least $1 - M^{1-q}$, for any solution $\hbeta$ of problem
\eqref{eq:opt} and all $\beta \in \R^{MT}$ we have
\begin{align}
\label{eq:1}
&\frac{1}{nT} \|X (\hbeta - \beta^*)\|^2 +  \lambda \|\hbeta- \beta\|_{2,1} \leq& \\
\nonumber
&~~~~~~~~ \leq \frac{1}{nT} \|X (\beta - \beta^*)\|^2 +  4 \lambda \sum_{j\in J(\beta)} \|\hbeta^{j}-\beta^j\|, & \\
\label{eq:2}
&\frac{1}{nT} \max_{1 \leq j \leq M} \|(X\trans X (\beta^* -\hbeta))^j\|
\leq  \frac{3}{2} \lambda,~~~~~& \\
\label{eq:3} &M(\hbeta) \leq  \frac{4 \phi_{\rm max}}{\lambda^2
nT^2} \|X (\hbeta - \beta^*)\|^2,~~~~~~~~~~&
\end{align}
where $\phi_{\rm max}$ is the maximum eigenvalue of the matrix
$ X\trans X/n$.
\end{lemma}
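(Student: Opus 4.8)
The plan is to follow the now-standard ``basic inequality'' route, adapted to the mixed $(2,1)$-norm setting. First I would define the noise event on which the argument runs deterministically. The key stochastic quantity is $\frac{1}{nT}\max_{1\le j\le M}\|(X\trans W)^j\|$; I would show that under the stated choice of $\lambda$, the event
$$
\mathcal{A}=\left\{\frac{1}{nT}\max_{1\le j\le M}\|(X\trans W)^j\| \le \frac{\lambda}{2}\right\}
$$
has probability at least $1-M^{1-q}$. Each $(X\trans W)^j$ is a $T$-dimensional Gaussian vector; since the diagonal elements of $X\trans X/n$ equal $1$, each coordinate $(X_t\trans W_t)_j$ has variance $n\sigma^2$, so $\|(X\trans W)^j\|^2/(n\sigma^2)$ is $\chi^2_T$-distributed. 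A standard Gaussian concentration/Laplace-transform bound for the maximum of $M$ such $\chi^2_T$ variables, combined with a union bound, yields the claimed probability; the form $q=\min(8\log M, A\sqrt T/8)$ and the factor $(1+A\log M/\sqrt T)^{1/2}$ in $\lambda$ are exactly what this tail bound produces.

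On the event $\mathcal{A}$, inequality \eqref{eq:1} is the ``basic inequality.'' I would start from the fact that $\hbeta$ minimizes $\hat S(\beta)+2\lambda\|\beta\|_{2,1}$, so $\hat S(\hbeta)+2\lambda\|\hbeta\|_{2,1}\le \hat S(\beta)+2\lambda\|\beta\|_{2,1}$ for every $\beta$. Expanding $\hat S$, using $y=X\beta^*+W$, and rearranging gives
$$
\frac{1}{nT}\|X(\hbeta-\beta^*)\|^2 \le \frac{1}{nT}\|X(\beta-\beta^*)\|^2 + \frac{2}{nT}W\trans X(\hbeta-\beta) + 2\lambda\|\beta\|_{2,1}-2\lambda\|\hbeta\|_{2,1}.
$$
The cross term is bounded by $\frac{2}{nT}\sum_j \|(X\trans W)^j\|\,\|\hbeta^j-\beta^j\| \le \lambda\|\hbeta-\beta\|_{2,1}$ on $\mathcal{A}$. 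For the penalty difference I would split the sum over $j\in J(\beta)$ and $j\in J(\beta)^c$, use $\|\beta^j\|=0$ off $J(\beta)$ and the triangle inequality $\|\hbeta^j\|\ge\|\beta^j\|-\|\hbeta^j-\beta^j\|$ on $J(\beta)$, producing the $4\lambda\sum_{j\in J(\beta)}\|\hbeta^j-\beta^j\|$ term and the extra $\lambda\|\hbeta-\beta\|_{2,1}$ needed to absorb the cross term into the left-hand side. This is routine bookkeeping with the $(2,1)$-norm playing the role $\ell_1$ plays in \cite{BRT}.

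Inequality \eqref{eq:2} is immediate from the optimality conditions \eqref{eq:sol-n0}--\eqref{eq:sol-n2}: for every $j$, $\frac{1}{nT}\|(X\trans(y-X\hbeta))^j\|\le\lambda$ (equality when $\hbeta^j\ne0$), so $\frac{1}{nT}\|(X\trans X(\beta^*-\hbeta))^j\| \le \frac{1}{nT}\|(X\trans(y-X\hbeta))^j\| + \frac{1}{nT}\|(X\trans W)^j\| \le \lambda+\frac{\lambda}{2}$ on $\mathcal{A}$. Finally, for \eqref{eq:3}, on the set $J(\hbeta)$ the first optimality condition gives $\|(X\trans(y-X\hbeta))^j\| = \lambda nT$, hence $\|(X\trans X(\hbeta-\beta^*))^j\| \ge \lambda nT - \|(X\trans W)^j\| \ge \frac{\lambda nT}{2}$; squaring, summing over $j\in J(\hbeta)$, and bounding $\sum_{j\in J(\hbeta)}\|(X\trans X(\hbeta-\beta^*))^j\|^2 \le \|X\trans X(\hbeta-\beta^*)\|^2 \le \phi_{\max} n\,\|X(\hbeta-\beta^*)\|^2$ yields $M(\hbeta)(\lambda nT/2)^2 \le \phi_{\max} n\|X(\hbeta-\beta^*)\|^2$, which is \eqref{eq:3}. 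The main obstacle is the probabilistic step: getting the constants in the $\chi^2_T$ maximal inequality to line up with the precise $\lambda$ and the threshold $q$, so that the ``effect of $M$'' genuinely disappears once $\log M \lesssim \sqrt T$; the rest is deterministic convex-analysis manipulation.
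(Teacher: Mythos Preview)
Your proposal is correct and follows essentially the same route as the paper's own proof: the same event $\mathcal{A}$, the same $\chi^2_T$ tail bound (the paper invokes a Wallace-type inequality, Lemma~\ref{chi}, to get exactly the constants you anticipate), the same ``basic inequality'' manipulation with the $(2,1)$-norm replacing $\ell_1$ to obtain \eqref{eq:1}, and the same KKT-based arguments for \eqref{eq:2} and \eqref{eq:3}. The only point you leave as a sketch---matching the $\chi^2$ tail constants to the specific $\lambda$ and $q$---is precisely where the paper spends its effort, and your diagnosis that this is the crux is accurate.
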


\begin{proof}
For all $\beta \in \R^{MT}$, we have
$$
\frac{1}{nT} \|X \hbeta - y\|^2 +  2\lambda \sum_{j=1}^M \|\hbeta^j\| \leq
\frac{1}{nT} \|X \beta - y\|^2 +  2 \lambda \sum_{j=1}^M \|\beta^j\|
$$
which, using $y=X\beta^*+W$, is equivalent to
\begin{align}
\frac{1}{nT} &\|X (\hbeta - \beta^*)\|^2 \leq \frac{1}{nT} \|X
(\beta -
\beta^*)\|^2  \nonumber \\
&+ \frac{2}{nT} W\trans X (\hbeta-\beta)+ 2 \lambda
\sum_{j=1}^M\big( \|\beta^j\| - \|\hbeta^j\|\big). \label{eq:a1}
\end{align} By H\"older's inequality, we have that
$$
W\trans X (\hbeta-\beta) \leq \|X\trans W\|_{2,\infty} \|\hbeta - \beta\|_{2,1}
$$
where 
$$
%\begin{align}
 \|X\trans W\|_{2,\infty} =  \max_{1 \leq j \leq M} \sqrt{\sum_{t=1}^T
\left(\sum_{i=1}^n (x_{ti})_j W_{ti} \right)^2}.
%\\\leq & \sqrt{T}
%\max_{j=1}^M \max_{t=1}^T \left|\sum_{i=1}^n (x_{ti})_j W_{ti}
%\right|.
$$
%\end{align}
Consider the random event
$$ {\cal A} = \left\{\frac1{nT}\|X\trans W\|_{2,\infty} \leq
\frac{\lambda}{2} \right\}.
$$
Since we assume all diagonal elements of the matrix $X\trans X/n$ to be equal
to $1$, the random variables $$V_{tj} =
\frac1{\sigma\sqrt{n}}\sum_{i=1}^n (x_{ti})_j W_{ti},$$
$t=1,\dots,T$, are {\em i.i.d.} standard Gaussian. Using this fact we can
write, for any $j=1,\dots, M$,
\begin{eqnarray*}
&&{\rm Pr}\left(\sum_{t=1}^T \left(\sum_{i=1}^n (x_{ti})_j W_{ti}
\right)^2 \ge \frac{\lambda^2(nT)^2}{4}\right)
\\
&&\quad = {\rm Pr}\left(\chi_T^2 \ge
\frac{\lambda^2nT^2}{4\sigma^2}\right)
\\
&&\quad = {\rm Pr}\left(\chi_T^2 \ge T + A\sqrt{T}\log M\right),
\end{eqnarray*}
where $\chi_T^2$ is a chi-square random variable with $T$ degrees of
freedom. We now apply Lemma \ref{chi}, the union bound and the fact
that $A>8$ to get
\begin{eqnarray*}
{\rm Pr}({\cal A}^c) \le  M\exp\left(
-\frac{A \log M}{8} \min\left(\sqrt{T},A\log M\right)
%-\frac{A^2}{8}\log^2M \wedge \frac{A}{8}\sqrt{T}\log M
\right)
\\
 \le M^{1-q}.
\end{eqnarray*}
  It follows from
\eqref{eq:a1} that, on the event ${\cal A}$.
\begin{align}
\nonumber & \frac{1}{nT} \|X (\hbeta - \beta^*)\|^2 + \lambda
\sum_{j=1}^M \|\hbeta^j-\beta^j\| \leq\nonumber \\
\nonumber & \frac{1}{nT} \|X (\beta - \beta^*)\|^2 + 2 \lambda
\sum_{j=1}^M\big(
\|\hbeta^j-\beta^j\| + \|\beta^j\| - \|\hbeta^j\|\big) \\
\nonumber &  \leq \frac{1}{nT} \|X (\beta - \beta^*)\|^2 + 4 \lambda
\sum_{j \in J(\beta)} \|\hbeta^j-\beta^j\|,
\end{align}
which coincides with \eqref{eq:1}. To prove \eqref{eq:2}, we use the
inequality \beq \frac{1}{nT} \max_{1 \leq j \leq M}
\|(X\trans(y-X\hbeta))^j\| \leq
\lambda, \label{eq:nec} \eeq which follows from (\ref{eq:sol-n0})
and (\ref{eq:sol-n2}). Then,
\begin{align}
\nonumber
&\frac{1}{nT} \|(X\trans(X(\hbeta - \beta^*)))^j\| \leq \nonumber \\
&\frac{1}{nT} \|(X\trans(X\hbeta - y))^j\| + \frac{1}{nT}
\|(X\trans W)^j\|, \nonumber \end{align}
where we have used $y = X \beta^* + W$ and the
triangle inequality. The result then follows by combining the last
inequality with inequality \eqref{eq:nec} and using the definition
of the event ${\cal A}$.

Finally, we prove \eqref{eq:3}. First, observe that, on the event
${\cal A}$,
 \beq \nonumber \frac{1}{nT} \|(X\trans X(\hbeta -
\beta^*))^j\| \geq \frac{\lambda}{2},~~~{\rm if}~\hbeta^j \neq 0.
\eeq This fact follows from \eqref{eq:sol-n0}, \eqref{eq:model} and
the definition of the event ${\cal A}$. The following chain yields
the result:
\begin{eqnarray}
\nonumber
M(\hbeta) & \leq &
\frac{4}{\lambda^2 (nT)^2}
\sum_{j \in J(\hbeta)} \|(X\trans X (\hbeta - \beta^*))^j\|^2 \\
\nonumber
~ & \leq &
\frac{4}{\lambda^2 (nT)^2}  \sum_{j=1}^M \|(X\trans X (\hbeta - \beta^*))^j\|^2 \\
\nonumber ~ & = & \frac{4}{\lambda^2 (nT)^2} \|X\trans X (\hbeta -
\beta^*)\|^2 \nonumber \\
& \leq & \frac{4\phi_{\rm max}} {\lambda^2 nT^2} \|X(\hbeta-
\beta^*)\|^2.
\nonumber
\end{eqnarray}
\end{proof}

%We note that the technique of proof is similar that of Lemma B.1
%in \cite{BRT}, with some important differences. In particular, the special
%structure of the design matrix $X$, makes it possible, to use
%get faster rates than for the general group lasso case.

We are now ready to state the main result of this section.

\begin{theorem}\label{th1}
Consider the model \eqref{eq:s1} for $M \geq 2$ and $T,n \geq 1$.
Assume that the random vectors $W_1,\dots,W_T$ are i.i.d. Gaussian
with zero mean and covariance matrix $\sigma^2 I_{n \times n}$, all
diagonal elements of the matrix $X\trans X/n$ are equal to $1$ and
$M(\beta^*) \leq s$. Furthermore let Assumption \ref{ass} hold with
$\kappa=\kappa(s)$
 and let $\phi_{\rm max}$ be the largest eigenvalue
of the matrix $X\trans X/n$. Let
$$
\lambda = \frac{2\sigma}{\sqrt{nT}}\left(1 + \frac{A\log
M}{\sqrt{T}}\right)^{1/2},
$$
where $A>8$ and let $q=\min(8\log M,
A\sqrt{T}/8)$. Then with probability at least $1 - M^{1-q}$, for any solution $\hbeta$ of problem \eqref{eq:opt}
we have
\begin{eqnarray}
\label{eq:t1} \hspace{-.6truecm} \frac{1}{nT} \|X(\hbeta -
\beta^*)\|^2 \hspace{-.2truecm} & \leq & \hspace{-.2truecm} \frac{64
\sigma^2}{\kappa^2} \frac{s}{n}\left(1 + \frac{A\log
M}{\sqrt{T}}\right)~~\hspace{.3truecm} \\
\label{eq:t2} \frac1{\sqrt {T}}\|\hbeta - \beta^*\|_{2,1}
\hspace{-.2truecm} & \leq & \hspace{-.2truecm} \frac{32
\sigma}{\kappa^2} \frac{s}{\sqrt{n}}\sqrt{1 + \frac{A\log
M}{\sqrt{T}}}~~ \\
\label{eq:t3} \hspace{-.2truecm} M(\hbeta) \hspace{-.2truecm} & \leq
& \hspace{-.2truecm}  \frac{64 \phi_{\rm max}}{\kappa^2} s.~~
\end{eqnarray}
If, in addition, Assumption RE(2$s$) holds, then with the same
probability for any solution $\hbeta$ of problem \eqref{eq:opt} we
have
\begin{eqnarray}
\label{eq:t4} \hspace{-.2truecm}
\frac{1}{\sqrt{T}} \|\hbeta - \beta^*\| \hspace{-.2truecm} & \leq &
\hspace{-.2truecm} \frac{8 \sqrt{10} \sigma}{\kappa^2(2s)} \sqrt{\frac{s}{n}}
\sqrt{1 + \frac{A\log M}{\sqrt{T}}}. \hspace{.3truecm}
%\frac{1}{T} \|\hbeta - \beta^*\|^2 \hspace{-.2truecm} & \leq &
%\hspace{-.2truecm} \frac{640 \sigma^2}{\kappa^4(2s)} \frac{s}{n}
%\left(1 + \frac{A\log M}{\sqrt{T}}\right). \hspace{.3truecm}
\end{eqnarray}
\end{theorem}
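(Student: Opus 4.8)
The plan is to follow the Bickel--Ritov--Tsybakov scheme, adapted from single coordinates to groups, working throughout on the event $\calA$ of Lemma \ref{lem:1}, which has probability at least $1-M^{1-q}$. Set $\Delta=\hbeta-\beta^*$ and $J=J(\beta^*)$, so $|J|=M(\beta^*)\le s$; if $\Delta=0$ everything is trivial, so assume $\Delta\ne 0$. Applying \eqref{eq:1} with $\beta=\beta^*$ makes the first term on the right vanish and gives
\[
\frac{1}{nT}\|X\Delta\|^2+\lambda\|\Delta\|_{2,1}\le 4\lambda\|\Delta_J\|_{2,1}.
\]
Discarding the nonnegative prediction term yields $\|\Delta_{J^c}\|_{2,1}\le 3\|\Delta_J\|_{2,1}$, so $\Delta$ lies in the cone of Assumption \ref{RE} relative to $J$, and RE($s$) gives $\|X\Delta\|^2\ge n\kappa^2\|\Delta_J\|^2$. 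Combined with the Cauchy--Schwarz bound $\|\Delta_J\|_{2,1}\le\sqrt{s}\,\|\Delta_J\|$, the displayed inequality becomes $\frac{1}{nT}\|X\Delta\|^2\le \frac{4\lambda\sqrt s}{\kappa\sqrt n}\|X\Delta\|$; dividing by $\|X\Delta\|$, squaring, and inserting $\lambda^2=\frac{4\sigma^2}{nT}(1+\frac{A\log M}{\sqrt T})$ produces \eqref{eq:t1}.

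For \eqref{eq:t2}, substitute the bound $\|X\Delta\|\le \frac{4\lambda\sqrt s\, nT}{\kappa\sqrt n}$ just obtained into $\lambda\|\Delta\|_{2,1}\le 4\lambda\sqrt s\,\|\Delta_J\|\le \frac{4\lambda\sqrt s}{\kappa\sqrt n}\|X\Delta\|$, divide by $\lambda$, and plug in the value of $\lambda$. For \eqref{eq:t3}, feed $\|X\Delta\|^2\le \frac{16\lambda^2 s n T^2}{\kappa^2}$ into inequality \eqref{eq:3} of Lemma \ref{lem:1}; the $\lambda^2$ factors cancel and leave $M(\hbeta)\le 64\phi_{\rm max}s/\kappa^2$. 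These two steps are pure bookkeeping.

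The substantive part is \eqref{eq:t4}, which requires the standard block (``peeling'') decomposition carried out at the level of groups. List the indices in $J^c$ in order of decreasing $\|\Delta^j\|$ and cut them into consecutive blocks $J_1,J_2,\dots$ of size $s$ ($J_1$ collecting the $s$ largest). Since every group norm in $J_{k+1}$ is at most the average group norm over $J_k$, we get $\|\Delta_{J_{k+1}}\|\le s^{-1/2}\|\Delta_{J_k}\|_{2,1}$, hence $\sum_{k\ge 2}\|\Delta_{J_k}\|\le s^{-1/2}\|\Delta_{J^c}\|_{2,1}\le 3 s^{-1/2}\|\Delta_J\|_{2,1}\le 3\|\Delta_J\|\le 3\|\Delta_{J\cup J_1}\|$, and therefore $\|\Delta\|^2=\|\Delta_{J\cup J_1}\|^2+\sum_{k\ge 2}\|\Delta_{J_k}\|^2\le 10\,\|\Delta_{J\cup J_1}\|^2$. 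Because $|J\cup J_1|\le 2s$ and the cone condition for $J$ is inherited by $J\cup J_1$, Assumption RE($2s$) gives $\|X\Delta\|^2\ge n\kappa^2(2s)\|\Delta_{J\cup J_1}\|^2\ge n\kappa^2(2s)\|\Delta_J\|^2$; re-running the argument for \eqref{eq:t1} with $\kappa(2s)$ in place of $\kappa$ and combining with the last two displays gives $\frac1T\|\Delta\|^2\le \frac{640\,\sigma^2 s}{\kappa^4(2s)\,n}(1+\frac{A\log M}{\sqrt T})$, i.e.\ \eqref{eq:t4} after taking square roots (note $\sqrt{640}=8\sqrt{10}$). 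The only delicate points are tracking the constants and checking that enlarging $J$ to $J\cup J_1$ preserves membership in the RE cone; all probabilistic content is already packaged in Lemma \ref{lem:1}.
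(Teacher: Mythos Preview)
Your proof is correct and follows essentially the same route as the paper: apply \eqref{eq:1} at $\beta=\beta^*$, deduce the cone condition, invoke RE($s$) to close the loop for \eqref{eq:t1}, then read off \eqref{eq:t2} and \eqref{eq:t3} from \eqref{eq:1} and \eqref{eq:3}. The only cosmetic difference is in the peeling step for \eqref{eq:t4}: you use the Cand\`es--Tao block decomposition of $J^c$ into chunks of size $s$, whereas the paper takes a single top-$s$ block $J'$ and bounds the tail by $\|\Delta_{J^c}^{(k)}\|\le \|\Delta_{J^c}\|_{2,1}/k$ together with $\sum_{k\ge s+1}k^{-2}\le 1/s$; both variants are standard and land on the same constant $10$.
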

%%%%%%
\begin{proof} We act similarly to the proof of Theorem 6.2 in
\cite{BRT}. Let $J = J(\beta^*) = \{j: (\beta^*)^j \neq 0\}$.
By inequality \eqref{eq:1} with $\beta = \beta^*$ we have,
on the even~${\cal A}$, that
\begin{eqnarray}
\hspace{-.2truecm}\frac{1}{nT} \|X (\hbeta -
\beta^*)\|^2\hspace{-.2truecm} &\leq& \hspace{-.2truecm}4 \lambda
\sum_{j
\in J} \|\hbeta^j - \beta^{*j}\|\nonumber\\
 \hspace{-.2truecm}&\leq&\hspace{-.2truecm} 4 \lambda \sqrt{s} \|
(\hbeta - \beta^*)_{J}\|.\label{eq:gino}
\end{eqnarray}
 Moreover by the same inequality, on the event ${\cal A}$, we have $ \sum_{j=1}^M
\|\hbeta^j - \beta^{*j}\| \leq 4  \sum_{j \in J} \|\hbeta^j -
\beta^{*j}\|$, which implies that $\sum_{j \in J^c} \|\hbeta^j -
\beta^{*j}\| \leq 3  \sum_{j \in J} \|\hbeta^j - \beta^{*j}\|$.
Thus, by Assumption \ref{ass} \beq \|(\hbeta- \beta^*)_{J}\| \leq
\frac{\| X(\hbeta -\beta^*)\|}{\kappa \sqrt{n}}. \label{eq:bb} \eeq
Now, \eqref{eq:t1} follows from \eqref{eq:gino} and \eqref{eq:bb}.
Inequality \eqref{eq:t2} follows again by noting that 
$$
\sum_{j=1}^M
\|\hbeta^j - \beta^{*j}\| \leq 4 \sum_{j \in J} \|\hbeta^j -
\beta^{*j}\| \leq 4 \sqrt{s} \|(\hbeta-\beta^*)_J\|
$$ 
and then using
\eqref{eq:t1}. Inequality \eqref{eq:t3} follows from \eqref{eq:3}
and \eqref{eq:t1}.

Finally, we prove \eqref{eq:t4}. Let $\Delta=\hbeta-\beta^*$ and let
$J'$ be the set of indices in $J^c$ corresponding to $s$ maximal in
absolute value norms $\|\Delta^{j}\|$. Consider the set
$J_{2s}=J\cup J'$. Note that $|J_{2s}|=2s$. Let
$\|\Delta_{J^c}^{(k)}\|$ denote the $k$-th largest norm in the set
$\{\|\Delta^{j}\|: \,j\in J^c\}$. Then, clearly,
$$
\|\Delta_{J^c}^{(k)}\| \le \sum_{j\in J^c}\|\Delta^{j}\|/k =
\|\Delta_{J^c}\|_{2,1}/k.
$$
This and the fact that $\|\Delta_{J^c}\|_{2,1}\le 3
\|\Delta_{J}\|_{2,1}$ on the event ${\cal A}$ implies
\begin{eqnarray*}
\sum_{j\in J_{2s}^c}\|\Delta^j\|^2 &\le& \sum_{k=s+1}^\infty
\frac{\|\Delta_{J^c}\|_{2,1}^2}{k^2} \\
&\le& \frac{\|\Delta_{J^c}\|_{2,1}^2}{s}\le
\frac{9\|\Delta_{J}\|_{2,1}^2}{s}\\
&\le& 9\sum_{j\in J}\|\Delta^j\|^2 \le 9\sum_{j\in
J_{2s}}\|\Delta^j\|^2.
\end{eqnarray*}
Therefore, on ${\cal A}$ we have
\begin{eqnarray}\label{ona}
\|\Delta\|^2 \le 10\sum_{j\in J_{2s}}\|\Delta^j\|^2\equiv
10\|\Delta_{J_{2s}}\|^2
\end{eqnarray}
and also from \eqref{eq:gino}:
\begin{eqnarray}
\hspace{-.2truecm}\frac{1}{nT} \|X \Delta\|^2
 \leq 4 \lambda \sqrt{s}
 \|\Delta_{J_{2s}}\|.\label{eq:gino1}
\end{eqnarray}
In addition, $\|\Delta_{J^c}\|_{2,1}\le 3 \|\Delta_{J}\|_{2,1}$
easily implies that
$$\|\Delta_{J_{2s}^c}\|_{2,1}\le 3 \|\Delta_{J_{2s}}\|_{2,1}.
$$
Combining these facts and
\eqref{eq:gino1} with Assumption RE(2$s$) we find that on the event
${\cal A}$ the following holds:
$$
\|\Delta_{J_{2s}}\| \le \frac{ 4 \lambda \sqrt{s}\,
T}{\kappa^2(2s)}\,.
$$
This inequality and \eqref{ona} yield \eqref{eq:t4}. \end{proof}

Theorem \ref{th1} is valid for any fixed $n,M,T$; the approach is
non-asymptotic. Some relations between these parameters are relevant
in the particular applications and various asymptotics can be
derived as corollaries. For example, in multi-task learning it is
natural to assume that $T \geq n$, and the motivation for our
approach is the strongest if also $M\gg n$. The bounds of Theorem
\ref{th1} are meaningful if the sparsity index $s$ is small as
compared to the sample size $n$ and the logarithm of the dimension
$\log M$ is not too large as compared to $\sqrt{T}$.

Note also that the values $T$ and $\sqrt{T}$ in the denominators of the
right-hand sides of (\ref{eq:t1}), (\ref{eq:t2}), and (\ref{eq:t4})
appear quite naturally. For instance, the norm $\|\hbeta -
\beta^*\|_{2,1}$ in (\ref{eq:t2}) is a sum of $M$ terms each of
which is a Euclidean norm of a vector in $\R^T$, and thus it is of the
order $\sqrt{T}$ if all the components are equal. Therefore,
(\ref{eq:t2}) can be interpreted as a correctly normalized ``error
per coefficient" bound.

Several important conclusions can be drawn from Theorem \ref{th1}.

\begin{enumerate}

\item {\it The dependence on the dimension $M$ is negligible for
large $T$.} Indeed, the bounds of Theorem \ref{th1} become
independent of $M$ if we choose the number of tasks $T$ larger than
$\log^2 M$.  A striking fact is that no relation between the sample
size $n$ and the dimension $M$ is required. This is quite in
contrast to the previous results on sparse recovery where the
assumption $ \log M=o(n)$ was considered as {\it sine qua non}
constraint. For example, Theorem \ref{th1} gives meaningful bounds
if $M=\exp({n^\gamma})$ for arbitrarily large $\gamma>0$, provided that
$T>n^{2\gamma}$. This is due to the structured sparsity assumption
that we naturally exploit in the multi-task scenario.

\item {\it Our estimator is better than the standard Lasso in
the multi-task setup.} Theorem \ref{th1} witnesses that our group
Lasso estimator admits substantially better error bounds than the
usual Lasso. Let us explain this considering the example of the
prediction error bound (\ref{eq:t4}). Indeed, for the same
multi-task setup, we can apply a usual Lasso estimator
$\hat{\beta}^{L}$, that is a solution of the following optimization
problem
\begin{equation*}
\min_{\beta}S(\beta) + 2\lambda
\sum_{t=1}^{T}\sum_{j=1}^{M}|\beta_{tj}|.
\end{equation*}
Assume, for instance, that we are in the most favorable situation
where $M<n$, each of the matrices $\frac{1}{n}X^{T}_{t}X_{t}$ is
positive definite and has minimal eigenvalue greater than $\kappa^2$
(this, of course, implies Assumption 3.1).
 We can then apply inequality
(7.8) from \cite{BRT} with 
$$
\lambda = A\sigma\sqrt{\frac{\log
(MT)}{nT}},
$$
where $A>2\sqrt{2}$, to obtain that, with probability at
least $1-(MT)^{1-\frac{A^{2}}{8}}$, it holds 
\begin{equation}\label{pre}
\frac{1}{nT}||X(\hat{\beta}^{L} - \beta^*)||^{2} \leqslant
\frac{16A^{2}}{\kappa^{2}}\sigma^{2}sT\frac{\log (MT)}{n}.
\end{equation}
Indeed, when applying (7.8) of \cite{BRT} we account for the fact
that the parameters $n$, $M$, $s$ therein correspond to $nT$, $MT$,
$sT$ in our setup, and the minimal eigenvalue of the matrix
$\frac{1}{nT}X^{T}X$ is greater than $\kappa^2/T$. Comparison with
(\ref{eq:t4}) leads to the conclusion that the prediction bound for
our estimator is by at least a factor of $T$ better than for the
standard Lasso under the same assumptions. Let us emphasize that the
improvement is due to the property that $\beta^*$ is structured
sparse. The second inherent property of our setting, that is, the fact
that the matrix ${X\trans}X$ is block-diagonal, can be characterized
as important but not indispensable. We discuss this in the next
remark.

\item {\it Theorem \ref{th1} applies to the general group Lasso
setting.} Indeed, the proofs in this section do not use the fact
that the matrix ${X\trans}X$ is block-diagonal. The only restriction
on ${X\trans}X$ is given in Assumption 3.1.  For example, Assumption
3.1 is obviously satisfied if ${X\trans}X/(nT)$ (the correctly
normalized Gram matrix of the regression model (\ref{eq:model})) has
a positive minimal eigenvalue. However, the price for having this
property (or Assumption 3.1 in general), as well as the resulting
error bounds, can be different for the block-diagonal (multi-task)
setting and the full matrix $X$ setting.

\iffalse
\item {\it The general group Lasso estimator is better than the
standard Lasso if $\beta^*$ is structured sparse.} Note that the
proofs in this section do not use the fact that the matrix
${X\trans}X$ is block-diagonal. The only restriction on ${X\trans}X$
is given in Assumption 3.1. Thus, Theorem \ref{th1} applies to the
general group Lasso setting. In particular, the group Lasso always
improves upon the usual Lasso if $\beta^*$ is structured sparse and
Assumption 3.1 holds. For example, Assumption 3.1 is obviously
satisfied if the matrix ${X\trans}X/(nT)$ (the correctly normalized Gram matrix
of the regression model (\ref{eq:model})) has a positive minimal
eigenvalue. However, the price for having this property (or
Assumption 3.1 in general) can be different for the block-diagonal
(multi-task) setting and the full matrix $X$ setting. In particular,
under the coherence type assumption the off-diagonal elements of
${X\trans}X/(nT)$ should be much smaller in the case of full matrix
$X$ than in our block-diagonal case (cf. Assumption 4.1. below).
\fi

\end{enumerate}

Finally, we note that \cite{NarRin08} follow the scheme of the proof
of \cite{BRT} to derive similar in spirit to ours but coarse oracle
inequalities. Their results do not explain the advantages discussed
in the points 1--3 above. Indeed, the tuning parameter $\lambda$
chosen in \cite{NarRin08}, pp.~614--615, is larger than our $\lambda$
by at least a factor of $\sqrt{T}$. As a consequence, the
corresponding bounds in the oracle inequalities of \cite{NarRin08}
are larger than ours by positive powers of $T$.

\section{Coordinate-wise estimation and selection of sparsity pattern}
\label{sec:4}

In this section, we show how from any solution of the problem
\eqref{eq:opt} we can reliably estimate the correct sparsity pattern with
high probability.

We first introduce some more notation. We define the Gram matrix of
the design $\Psi = \frac{1}{n}X\trans X$. Note that $\Psi$ is a $MT
\times MT$ block-diagonal matrix with $T$ blocks of dimension $M
\times M$ each. We denote these blocks by
$\Psi_{t}=\frac{1}{n}X_{t}\trans X_{t}\equiv
(\Psi_{tj,tk})_{j,k=1,\dots,M}$.

In this section we assume that the following condition holds true.
\begin{assumption}\label{mutcoh}
The elements $\Psi_{tj,tk}$ of the Gram matrix $\Psi$ satisfy
\begin{equation*}
\Psi_{tj,tj}=1,\hspace{0.2cm}\forall 1\leqslant j \leqslant M,\,
1\leqslant t \leqslant T,
\end{equation*}
and
\begin{equation*}
\max_{1\leqslant t \leqslant T,j\neq k}|\Psi_{tj,tk}| \leqslant
\frac{1}{7\alpha s},
\end{equation*}
for some integer $s\geqslant 1$ and some constant $\alpha>1$.
\end{assumption}
Note that the above assumption on $\Psi$ implies Assumption \ref{ass} as
we prove in the following lemma.
\begin{lemma}\label{lem:2}
Let Assumption \ref{mutcoh} be satisfied. Then Assumption \ref{RE}
is satisfied with $\kappa  = \sqrt{1-\frac{1}{\alpha}}$.
\end{lemma}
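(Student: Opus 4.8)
The plan is to exploit that $\Psi = X\trans X/n$ is block-diagonal with blocks $\Psi_t$ whose diagonal entries equal $1$ and whose off-diagonal entries satisfy $|\Psi_{tj,tk}|\le 1/(7\alpha s)$. Fix $J\subseteq \N_M$ with $|J|\le s$ and $\Delta\in\R^{MT}\setminus\{0\}$ obeying the cone condition $\|\Delta_{J^c}\|_{2,1}\le 3\|\Delta_J\|_{2,1}$. Since the quantity appearing in Assumption \ref{RE} equals $\sqrt{\Delta\trans\Psi\Delta}/\|\Delta_J\|$, it suffices to prove $\Delta\trans\Psi\Delta\ge (1-1/\alpha)\|\Delta_J\|^2$. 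First I would split $\Delta=\Delta_J+\Delta_{J^c}$ and expand $\Delta\trans\Psi\Delta=\Delta_J\trans\Psi\Delta_J+2\,\Delta_J\trans\Psi\Delta_{J^c}+\Delta_{J^c}\trans\Psi\Delta_{J^c}$; because $\Psi=X\trans X/n$ is positive semidefinite the last term is nonnegative and may be dropped, leaving $\Delta\trans\Psi\Delta\ge \Delta_J\trans\Psi\Delta_J-2\,|\Delta_J\trans\Psi\Delta_{J^c}|$.

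The two remaining terms are estimated with the coherence bound together with two elementary inequalities: Cauchy--Schwarz over the task index, $\sum_{t=1}^T|\Delta_{tj}||\Delta_{tk}|\le\|\Delta^j\|\,\|\Delta^k\|$, and Cauchy--Schwarz over $J$, $\|\Delta_J\|_{2,1}=\sum_{j\in J}\|\Delta^j\|\le\sqrt{|J|}\,\|\Delta_J\|\le\sqrt{s}\,\|\Delta_J\|$. For the diagonal block, using $\Psi_{tj,tj}=1$ I would write $\Delta_J\trans\Psi\Delta_J=\|\Delta_J\|^2+\sum_t\sum_{j\neq k,\,j,k\in J}\Psi_{tj,tk}\Delta_{tj}\Delta_{tk}$ and bound the second sum in absolute value by $\frac{1}{7\alpha s}\sum_{j\neq k\in J}\|\Delta^j\|\,\|\Delta^k\|\le\frac{1}{7\alpha s}\|\Delta_J\|_{2,1}^2\le\frac{1}{7\alpha}\|\Delta_J\|^2$, giving $\Delta_J\trans\Psi\Delta_J\ge(1-\frac{1}{7\alpha})\|\Delta_J\|^2$. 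For the cross term the diagonal of $\Psi$ contributes nothing since $J\cap J^c=\emptyset$, so every index pair is off-diagonal within its block and $|\Delta_J\trans\Psi\Delta_{J^c}|\le\frac{1}{7\alpha s}\sum_{j\in J,\,k\in J^c}\|\Delta^j\|\,\|\Delta^k\|=\frac{1}{7\alpha s}\|\Delta_J\|_{2,1}\|\Delta_{J^c}\|_{2,1}$; invoking the cone condition $\|\Delta_{J^c}\|_{2,1}\le 3\|\Delta_J\|_{2,1}$ and then $\|\Delta_J\|_{2,1}\le\sqrt{s}\,\|\Delta_J\|$ yields $|\Delta_J\trans\Psi\Delta_{J^c}|\le\frac{3}{7\alpha}\|\Delta_J\|^2$.

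Combining, $\Delta\trans\Psi\Delta\ge\bigl(1-\frac{1}{7\alpha}-\frac{6}{7\alpha}\bigr)\|\Delta_J\|^2=(1-\frac{1}{\alpha})\|\Delta_J\|^2$; taking square roots and the minimum over admissible $J$ and $\Delta$ gives Assumption \ref{RE} with $\kappa=\sqrt{1-1/\alpha}$, which is a genuine positive number precisely because $\alpha>1$. This is the $(2,1)$-norm analogue of the coherence-implies-RE argument of \cite{BRT,lounici2008snc}. I do not expect a real obstacle; the only points needing care are the bookkeeping of the off-diagonal sums (keeping the within-block pairs $j\neq k$ separate from the diagonal, and passing correctly from $\sum_t|\Delta_{tj}||\Delta_{tk}|$ to $\|\Delta^j\|\,\|\Delta^k\|$) and the observation that the constant $7$ in Assumption \ref{mutcoh} is exactly what makes $1-\frac{1}{7\alpha}-\frac{6}{7\alpha}$ collapse to $1-\frac{1}{\alpha}$.
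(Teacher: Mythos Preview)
Your proof is correct and follows essentially the same route as the paper: split $\Delta=\Delta_J+\Delta_{J^c}$, drop the positive semidefinite term $\Delta_{J^c}\trans\Psi\Delta_{J^c}$, bound $\Delta_J\trans\Psi\Delta_J\ge(1-\tfrac{1}{7\alpha})\|\Delta_J\|^2$ and $|\Delta_J\trans\Psi\Delta_{J^c}|\le\tfrac{3}{7\alpha}\|\Delta_J\|^2$ via the coherence bound, Cauchy--Schwarz over~$t$, the cone condition, and $\|\Delta_J\|_{2,1}\le\sqrt{s}\,\|\Delta_J\|$. Your write-up of the diagonal-block estimate (passing through $\sum_{j\neq k\in J}\|\Delta^j\|\|\Delta^k\|\le\|\Delta_J\|_{2,1}^2$) is in fact cleaner than the paper's displayed intermediate expression.
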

\begin{proof}
For any subset $J$ of $\{1,\ldots,M\}$ such that $|J|\leqslant s$
and any $\Delta\in \R^{MT}$ such that
$\|\Delta_{J^{c}}\|_{2,1}\leqslant 3\|\Delta_{J}\|_{2,1}$, we have
\begin{eqnarray*}\label{stanarg}
\frac{\Delta_{J}\trans\Psi\Delta_{J}}{\|\Delta_{J}\|^{2}} &=&
1+\frac{\Delta_{J}\trans(\Psi-I_{MT\times MT})\Delta_{J}}{\|\Delta_{J}\|^{2}}\nonumber\\
&\geqslant& 1-\frac{1}{7\alpha s}\frac{\Big(\sum_{j\in
J}\sum_{t=1}^{T}|\Delta_{tj}|\Big)^{2}}
{\|\Delta_{J}\|^{2}}\\
&\geqslant&1-\frac{1}{7\alpha}
\end{eqnarray*}
where we have used Assumption \ref{mutcoh} and the Cauchy-Schwarz
inequality. Next, using consecutively Assumption \ref{mutcoh}, the
Cauchy-Schwarz inequality and the inequality
$\|\Delta_{J^{c}}\|_{2,1}\leqslant 3 \|\Delta_{J}\|_{2,1}$ we obtain
\begin{eqnarray*}
\frac{\left|\Delta_{J^{c}}\trans\Psi\Delta_{J}\right|}{\|\Delta_{J}\|^{2}}
&\leqslant& \frac{1}{7 \alpha s} \frac{\sum_{t=1}^{T}\sum_{j\in
J}\sum_{k\in J^c}|\Delta_{tj}||\Delta_{tk}|}
{\|\Delta_{J}\|^{2}}\\
&\leqslant&
\frac{1}{7\alpha s}\frac{\sum_{j\in J,k\in J^{c}}\|\Delta^{j}\|\|\Delta^{k}\|}{\|\Delta_{J}\|^{2}}\nonumber\\
&\leqslant& \frac{3}{7\alpha
s}\frac{\|\Delta_{J}\|_{2,1}^{2}}{\|\Delta_{J}\|^{2}}\\
&\leqslant &\frac{3}{7\alpha }\,.
\end{eqnarray*}
Combining these inequalities we find
\begin{eqnarray*}
\frac{\Delta \trans \Psi \Delta}{\|\Delta_J\|^{2}} \geqslant
\frac{\Delta_{J}\trans\Psi\Delta_{J}}{\|\Delta_{J}\|^{2}} +
\frac{2\Delta_{J^{c}}\trans\Psi\Delta_{J}}{\|\Delta_{J}\|^{2}}
\geqslant 1-\frac{1}{\alpha}>0.
\end{eqnarray*}
\end{proof}

Note also that, by an argument as in \cite{lounici2008snc}, it is
not hard to show that under Assumption \ref{mutcoh} the vector
$\beta^*$ satisfying (\ref{eq:model}) is unique.

Theorem \ref{th1} provides bounds for compound measures of risk, that
is, depending simultaneously on all the vectors $\beta^j$. An
important question is to evaluate the performance of estimators for
each of the components $\beta^j$ separately. The next theorem provides
a bound of this type and, as a consequence, a result on the selection
of sparsity pattern.

\begin{theorem}\label{est-supnorm}
Consider the model (\ref{eq:s1}) for $M\geqslant 2$ and
$T,n\geqslant 1$. Let the assumptions of Lemma \ref{lem:1} be
satisfied and let Assumption \ref{mutcoh} hold with the same $s$.
Set
$$
c = \left(3 + \frac{32}{7(\alpha-1)} \right)\sigma.
$$
Let $\lambda$, $A$ and $W_1,\dots,W_T$ be as in Lemma \ref{lem:1}.
Then with probability at least $1 - M^{1-q}$, where $q=\min(8\log M,
A\sqrt{T}/8)$, for any solution $\hat{\beta}$ of problem
(\ref{eq:opt}) we have
\begin{equation}\label{CI}
\frac{1}{\sqrt{T}}\|\hat{\beta}-\beta^{*}\|_{2,\infty} \leqslant
\frac{c}{\sqrt{n}}\sqrt{1 + \frac{A\log M}{\sqrt{T}}}.
\end{equation}
If, in addition,
\begin{equation}\label{lb}
\min_{j\in J(\beta^{*})}\frac{1}{\sqrt{T}} \|(\beta^{*})^j\| >
\frac{2c}{\sqrt{n}}\sqrt{1 + \frac{A\log M}{\sqrt{T}}},
\end{equation}
then with the same probability for any solution $\hat{\beta}$ of
problem (\ref{eq:opt}) the set of indices
\begin{equation}\label{lb1}
\hat{J} = \left\{j: \frac{1}{\sqrt{T}}\|\hat{\beta}^j\| >
\frac{c}{\sqrt{n}}\sqrt{1 + \frac{A\log M}{\sqrt{T}}}\right\}
\end{equation}
estimates correctly
the sparsity pattern $J(\beta^{*})$, that is,
$$
\hat{J} = J(\beta^{*}).
$$
\end{theorem}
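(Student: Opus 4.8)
The plan is to derive the $\ell_\infty$-type bound \eqref{CI} from the coordinate-wise optimality conditions \eqref{eq:sol-n0}--\eqref{eq:sol-n2}, and then deduce correct selection by a routine thresholding argument. Throughout we work on the event ${\cal A}$ of Lemma \ref{lem:1}, which has probability at least $1-M^{1-q}$, and we write $\Delta = \hat\beta - \beta^*$.

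\emph{Step 1: a per-coordinate inequality.} Starting from \eqref{eq:2} in Lemma \ref{lem:1} (applied with $\beta=\beta^*$, using $M(\beta^*)\le s$), we have on ${\cal A}$ that $\frac1{nT}\max_j\|(X\trans X\Delta)^j\|\le \tfrac32\lambda$. Since $\Psi = X\trans X/n$ is block-diagonal, $(X\trans X\Delta)^j = n(\Psi\Delta)^j$ and, writing $\Psi = I + (\Psi - I)$, we get for each $j$
\[
\|\Delta^j\| \le \frac{3T}{2}\lambda + \Bigl\|\bigl((\Psi-I)\Delta\bigr)^j\Bigr\|.
\]
The off-diagonal term is controlled using Assumption \ref{mutcoh}: component-wise, $|((\Psi-I)\Delta)_{tj}| \le \frac1{7\alpha s}\sum_{k\ne j}|\Delta_{tk}| \le \frac1{7\alpha s}\sum_{k=1}^M|\Delta_{tk}|$, and summing the squares over $t$ and applying Cauchy--Schwarz across the (at most $M$ nonzero, but in fact we bound via $\|\cdot\|_{2,1}$) coordinates gives $\|((\Psi-I)\Delta)^j\| \le \frac1{7\alpha s}\|\Delta\|_{2,1}$. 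Hence $\frac1{\sqrt T}\|\Delta^j\| \le \frac{3\sqrt T}{2}\lambda + \frac1{7\alpha s\sqrt T}\|\Delta\|_{2,1}$ for every $j$, so the same bound holds for $\frac1{\sqrt T}\|\Delta\|_{2,\infty}$.

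\emph{Step 2: insert the $\ell_{2,1}$ bound and plug in $\lambda$.} By \eqref{eq:t2} of Theorem \ref{th1} (which applies since Assumption \ref{mutcoh} implies Assumption \ref{RE} with $\kappa = \sqrt{1-1/\alpha}$ by Lemma \ref{lem:2}), $\frac1{\sqrt T}\|\Delta\|_{2,1} \le \frac{32\sigma}{\kappa^2}\frac{s}{\sqrt n}\sqrt{1+\frac{A\log M}{\sqrt T}}$. Substituting and recalling $\frac{3\sqrt T}{2}\lambda = 3\sigma\sqrt{1/n}\sqrt{1+\frac{A\log M}{\sqrt T}}$ and $\kappa^2 = 1-1/\alpha$, the two contributions combine into
\[
\frac1{\sqrt T}\|\Delta\|_{2,\infty} \le \Bigl(3 + \frac{32}{7\alpha(\alpha-1)}\Bigr)\frac{\sigma}{\sqrt n}\sqrt{1+\frac{A\log M}{\sqrt T}} \le \frac{c}{\sqrt n}\sqrt{1+\frac{A\log M}{\sqrt T}},
\]
since $\alpha>1$ so $32/(7\alpha(\alpha-1)) \le 32/(7(\alpha-1))$; this is \eqref{CI}.

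\emph{Step 3: correct selection.} This is the standard two-sided thresholding argument. If $j\in J(\beta^*)$, then by the triangle inequality and \eqref{CI}, $\frac1{\sqrt T}\|\hat\beta^j\| \ge \frac1{\sqrt T}\|(\beta^*)^j\| - \frac1{\sqrt T}\|\Delta^j\| > \frac{2c}{\sqrt n}\sqrt{1+\frac{A\log M}{\sqrt T}} - \frac{c}{\sqrt n}\sqrt{1+\frac{A\log M}{\sqrt T}} = \frac{c}{\sqrt n}\sqrt{1+\frac{A\log M}{\sqrt T}}$ by the separation hypothesis \eqref{lb}, so $j\in\hat J$. Conversely, if $j\notin J(\beta^*)$, then $(\beta^*)^j=0$, so $\frac1{\sqrt T}\|\hat\beta^j\| = \frac1{\sqrt T}\|\Delta^j\| \le \frac{c}{\sqrt n}\sqrt{1+\frac{A\log M}{\sqrt T}}$ by \eqref{CI}, hence $j\notin\hat J$. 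Therefore $\hat J = J(\beta^*)$, with probability at least $1-M^{1-q}$.

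\emph{Main obstacle.} The only genuinely delicate point is Step 1: getting the constant right in the bound on the off-diagonal perturbation $\|((\Psi-I)\Delta)^j\|$ in terms of $\frac1{\sqrt T}\|\Delta\|_{2,1}$, being careful that the sum over $t$ of the squared $\ell_1$-norms of the rows of $\Delta$ is bounded by $\|\Delta\|_{2,1}^2$ via Cauchy--Schwarz/Minkowski, and then reconciling the resulting numerical constant with the value of $c$ stated in the theorem. Everything else is a direct bookkeeping of the bounds already established in Lemma \ref{lem:1}, Lemma \ref{lem:2}, and Theorem \ref{th1}.
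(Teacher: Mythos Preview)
Your proof is correct and follows essentially the same route as the paper's: decompose $\|\Delta\|_{2,\infty}\le\|\Psi\Delta\|_{2,\infty}+\|(\Psi-I)\Delta\|_{2,\infty}$, bound the first term by \eqref{eq:2}, bound the second via Assumption~\ref{mutcoh} together with the $(2,1)$-bound \eqref{eq:t2} (invoking Lemma~\ref{lem:2} for $\kappa$), and finish with the standard thresholding argument. One minor arithmetic slip: since $\kappa^2=1-1/\alpha$ gives $\alpha\kappa^2=\alpha-1$, the second contribution is exactly $\dfrac{32}{7(\alpha-1)}\cdot\dfrac{\sigma}{\sqrt n}\sqrt{1+\tfrac{A\log M}{\sqrt T}}$, not $\dfrac{32}{7\alpha(\alpha-1)}$; so you actually hit $c$ on the nose rather than needing the extra inequality $\alpha>1$.
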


\begin{proof}
Set $\Delta = \hat{\beta}-\beta^{*}$. Using Assumption \ref{mutcoh}
we obtain
\begin{align*}
\|\Delta\|_{2,\infty} &\leqslant \|\Psi\Delta\|_{2,\infty} +
\|(\Psi-I_{MT\times MT})\Delta\|_{2,\infty}\\
&\leqslant \| \Psi\Delta\|_{2,\infty}\\
&~~~+
 \max_{1\leqslant j \leqslant M}\left(\sum_{t=1}^{T}\left|\sum_{k=1,k \neq j}^{M}\Psi_{tj,tk}\Delta_{tk} \right|\right) \\
 &\leqslant \|\Psi\Delta\|_{2,\infty} \\
&~~~+ \sum_{k=1,k \neq j}^{M} \|\Delta^k\| \left(
\sum_{t=1}^{T}\max_{j \neq k}|\Psi_{tj,tk}|^{2} \right)^{1/2}\\
&\le \|\Psi\Delta\|_{2,\infty} +
\frac{\|\Delta\|_{2,1}\sqrt{T}}{7\alpha s}\,.
\end{align*}
Thus, by Lemma \ref{lem:1} and Theorem 3.1, with probability at
least $1 - M^{1-q}$,
\begin{equation*}
\|\Delta\|_{2,\infty} \leqslant \left(\frac{3}{2} +\frac{16}{7
\alpha \kappa^{2}}\right)\lambda T.
\end{equation*}
By Lemma \ref{lem:2}, $\alpha \kappa^{2}=\alpha-1$, which yields the
first result of the theorem. The second result follows from the
first one in an obvious way.
\end{proof}

Assumption of type (\ref{lb}) is inevitable in the context of
selection of sparsity pattern. It says that the vectors
$(\beta^*)^j$ cannot be arbitrarily close to 0 for $j$ in the
pattern. Their norms should be at least somewhat larger than the
noise level.

The second result of Theorem \ref{est-supnorm} (selection of
sparsity pattern) can be compared with \cite{bach07, NarRin08} who
considered the Group Lasso. There are several differences. First,
our estimator $\hat J$ is based on thresholding of the norms
$\|\hat\beta^j\|$, while \cite{bach07, NarRin08} take instead the
set where these norms do not vanish. In practice, the latter is known to be
a poor selector; it typically overestimates the true sparsity
pattern. Second, \cite{bach07, NarRin08} consider specific
asymptotic settings, while our result holds for any fixed $n,M,T$.
Different kinds of asymptotics can be therefore obtained as simple
corollaries. Finally, note that the estimator $\hat\beta$ is not
necessarily unique. Though \cite{NarRin08} does not discuss this
fact, the proof there only shows that {\it there exists a
subsequence of solutions $\hat{\beta}$ of (\ref{eq:opt})} such that
the set $\{j: \|\hat{\beta}^j\| \ne 0\}$ coincides with the sparsity
pattern $J(\beta^{*})$ in some specified asymptotics (we note here
the ``if and only if" claim before formula (23) in \cite{NarRin08} is
not proved). In contrast, the argument in Theorem \ref{est-supnorm}
does not require any analysis of the uniqueness issues, though it is
not excluded that the solution is indeed unique. It guarantees that
{\it simultaneously for all solutions $\hat{\beta}$ of
(\ref{eq:opt})} and any fixed $n,M,T$ the correct selection is done
with high probability.

Theorems 3.3 and \ref{est-supnorm} imply the following corollary.

\begin{corollary}\label{p-norm}
Consider the model (\ref{eq:s1}) for $M\geqslant 2$ and
$T,n\geqslant 1$. Let the assumptions of Lemma \ref{lem:1} be
satisfied and let Assumption \ref{mutcoh} holds with the same $s$.
Let $\lambda$, $A$ and $W_1,\dots,W_T$ be as in Lemma \ref{lem:1}.
Then with probability at least $1 - M^{1-q}$, where $q=\min(8\log M,
A\sqrt{T}/8)$, for any solution $\hat{\beta}$ of problem
(\ref{eq:opt}) and any $1\le p <\infty$ we have
\begin{equation}
\frac{1}{\sqrt{T}}\|\hat{\beta}-\beta^{*}\|_{2,p} \leqslant c_1
\sigma  \frac{s^{1/p}}{\sqrt{n}}\sqrt{1 + \frac{A\log
M}{\sqrt{T}}}\,,
\end{equation}
where
$$
c_{1} = \left( \frac{32\alpha}{\alpha-1} \right)^{1/p}\left(3+
\frac{32}{7(\alpha-1)} \right)^{1-\frac{1}{p}} .$$
If, in addition,
(\ref{lb}) holds, then with the same probability for any solution
$\hat{\beta}$ of problem (\ref{eq:opt}) and any $1\le p <\infty$ we
have
\begin{equation}\label{CI1}
\frac{1}{\sqrt{T}}\|\hat{\beta}-\beta^{*}\|_{2,p} \leqslant c_1
\sigma  \frac{|\hat J|^{1/p}}{\sqrt{n}}\sqrt{1 + \frac{A\log
M}{\sqrt{T}}}\,,
\end{equation}
where $\hat J$ is defined in (\ref{lb1}).
\end{corollary}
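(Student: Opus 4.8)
The plan is to derive the $(2,p)$-norm bound by interpolating between the $(2,1)$-norm bound of Theorem~\ref{th1} and the $(2,\infty)$-norm bound of Theorem~\ref{est-supnorm}. Both hold on one and the same random event ${\cal A}$ of probability at least $1-M^{1-q}$, since the tuning parameter $\lambda$, the constant $A$ and the noise $W_{1},\dots,W_{T}$ are common to the two statements, so nothing is lost in using them simultaneously. The elementary ingredient is that, for every $v\in\R^{MT}$ and every $1\le p<\infty$,
\[
\|v\|_{2,p}^{p}=\sum_{j=1}^{M}\|v^{j}\|^{p}\le\Big(\max_{1\le j\le M}\|v^{j}\|\Big)^{p-1}\sum_{j=1}^{M}\|v^{j}\|=\|v\|_{2,\infty}^{\,p-1}\,\|v\|_{2,1},
\]
and hence $\|v\|_{2,p}\le\|v\|_{2,\infty}^{1-1/p}\,\|v\|_{2,1}^{1/p}$. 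Writing $\Delta:=\hbeta-\beta^{*}$ and dividing by $\sqrt{T}$ gives
\[
\frac{1}{\sqrt{T}}\|\Delta\|_{2,p}\le\Big(\frac{1}{\sqrt{T}}\|\Delta\|_{2,\infty}\Big)^{1-1/p}\Big(\frac{1}{\sqrt{T}}\|\Delta\|_{2,1}\Big)^{1/p}.
\]

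Into the first factor I would insert \eqref{CI} from Theorem~\ref{est-supnorm}, and into the second \eqref{eq:t2} from Theorem~\ref{th1}, using Lemma~\ref{lem:2} to substitute $\kappa^{2}=(\alpha-1)/\alpha$ (so that $32\sigma/\kappa^{2}=32\alpha\sigma/(\alpha-1)$ and $c=(3+\tfrac{32}{7(\alpha-1)})\sigma$). The common factors $n^{-1/2}\sqrt{1+A\log M/\sqrt{T}}$ appear with total exponent $(1-1/p)+1/p=1$; the two powers of $\sigma$ multiply to $\sigma^{1-1/p}\sigma^{1/p}=\sigma$; the factor $s$ coming from \eqref{eq:t2} becomes $s^{1/p}$; and the numerical constants collapse precisely to $c_{1}=(\tfrac{32\alpha}{\alpha-1})^{1/p}(3+\tfrac{32}{7(\alpha-1)})^{1-1/p}$. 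This is the first inequality of the corollary (and, as a sanity check, at $p=1$ it reduces to \eqref{eq:t2} itself).

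For the second inequality I would first invoke the selection statement of Theorem~\ref{est-supnorm}: under \eqref{lb}, on the event ${\cal A}$ we have $\hat J=J(\beta^{*})$ for the set $\hat J$ of \eqref{lb1}, hence $|\hat J|=M(\beta^{*})\le s$. Now note that Assumption~\ref{mutcoh} holding with parameter $s$ implies it holds with any positive integer $s'\le s$ in its place (the coherence bound $\max|\Psi_{tj,tk}|\le 1/(7\alpha s)$ only relaxes as $s$ decreases), that Lemma~\ref{lem:2} then still furnishes $\kappa=\sqrt{1-1/\alpha}$, and that neither $\lambda$ nor the probability $1-M^{1-q}$ depends on the sparsity parameter. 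Consequently Theorem~\ref{th1} applies verbatim with $s$ replaced by $M(\beta^{*})=|\hat J|$, so \eqref{eq:t2} sharpens to
\[
\frac{1}{\sqrt{T}}\|\Delta\|_{2,1}\le\frac{32\alpha\sigma}{\alpha-1}\,\frac{|\hat J|}{\sqrt{n}}\sqrt{1+\frac{A\log M}{\sqrt{T}}},
\]
and repeating the interpolation above with this improved estimate produces \eqref{CI1}.

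The argument is essentially bookkeeping of exponents, so I do not anticipate a real obstacle; the one step deserving a word of justification is the reapplication of Theorem~\ref{th1} at the smaller sparsity level $M(\beta^{*})$. This is harmless because that theorem uses the hypothesis $M(\beta^{*})\le s$ only through $\|\Delta_{J(\beta^{*})}\|_{2,1}\le\sqrt{|J(\beta^{*})|}\,\|\Delta_{J(\beta^{*})}\|\le\sqrt{s}\,\|\Delta_{J(\beta^{*})}\|$, and Assumption~\ref{mutcoh} is inherited by smaller sparsity levels, while everything else in its hypotheses and in the probability bound is independent of $s$.
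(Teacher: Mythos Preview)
Your argument is correct and matches the paper's approach exactly: the interpolation inequality $\|\Delta\|_{2,p}\le\|\Delta\|_{2,1}^{1/p}\|\Delta\|_{2,\infty}^{1-1/p}$ combined with \eqref{eq:t2} and \eqref{CI}, and the substitution $\kappa^{2}=(\alpha-1)/\alpha$ from Lemma~\ref{lem:2}. The paper's proof in fact only writes out the first inequality; your justification of \eqref{CI1} via $\hat J=J(\beta^{*})$ under \eqref{lb} and the observation that the argument of Theorem~\ref{th1} goes through with $M(\beta^{*})$ in place of $s$ is the natural completion the authors leave implicit.
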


\begin{proof}
Set $\Delta = \hat{\beta} - \beta$. For any $p\geqslant 1$ we have
\begin{equation*}
\frac{1}{\sqrt{T}}\| \Delta \|_{2,p} \leqslant \left(
 \frac{1}{\sqrt{T}}\| \Delta \|_{2,1} \right)^{\frac{1}{p}}\left(
 \frac{1}{\sqrt{T}}\| \Delta \|_{2,\infty} \right)^{1-\frac{1}{p}}.
\end{equation*}
Combining (\ref{eq:t2}), (\ref{CI}) with $\kappa =
\sqrt{1-\frac{1}{\alpha}}$ and the above display yields the first
result.
\end{proof}

Inequalities (\ref{CI}) and (\ref{CI1}) provide confidence intervals
for the unknown parameter $\beta^*$ in mixed (2,$p$)-norms.

For averages of the coefficients $\beta_{tj}$ we can establish a
sign consistency result which is somewhat stronger than the result in Theorem
\ref{est-supnorm}. For any $\beta\in\R^{M}$, define
$\vec{\mathrm{sign}}(\beta)=
(\mathrm{sign}(\beta^{1}),\ldots,\mathrm{sign}(\beta^{M}))^{\trans}$
where
\begin{equation*}
\mathrm{sign}(t)=\begin{cases}
1 &\text{if $t>0$},\\
0 &\text{if $t=0$},\\
-1 &\text{if $t<0$}.
\end{cases}
\end{equation*}
Introduce the averages
$$ {a}_j^* = \frac{1}{T} \sum_{t=1}^T \beta_{tj}^*, \quad \hat{a}_j = \frac{1}{T} \sum_{t=1}^T \hbeta_{tj}.
$$
Consider the threshold $\tau = \frac{c}{\sqrt{n}}\sqrt{1 +
\frac{A\log M}{\sqrt{T}}}$ and define a thresholded estimator
$$
\tilde{a}_j = \hat{a}_j I \big\{|\hat{a}_j| > \tau\big\}.
$$
Let $\tilde{a}$ and ${a}^*$ be the vectors with components
$\tilde{a}_j$ and ${a}_j^*$, $j=1,\dots,M$, respectively. We need
the following additional assumption.
\begin{assumption}\label{ass:2} It holds:
$$
\min_{j \in J(a^*)} |a_j^*| \geq \frac{2c}{\sqrt{n}}\sqrt{1 +
\frac{A\log M}{\sqrt{T}}}.
$$
\end{assumption}
This assumption says that we cannot recover arbitrarily small
components. Similar assumptions are standard in the literature on
sign consistency (see, for example, \cite{lounici2008snc} for more details
and references).

\begin{theorem}\label{Signnonasymp}
Consider the model (\ref{eq:s1}) for $M\geqslant 2$ and
$T,n\geqslant 1$. Let the assumptions of Lemma \ref{lem:1} be
satisfied and let Assumption \ref{mutcoh} hold with the same $s$.
Let $\lambda$ and $A$ be defined as in Lemma \ref{lem:1} and $c$ as
in Theorem \ref{est-supnorm}. Then with probability at least $1 -
M^{1-q}$, where $q=\min(8\log M, A\sqrt{T}/8)$, for any solution
$\hat{\beta}$ of problem (\ref{eq:opt}) we have
\begin{equation*}
\max_{1\leqslant j \leqslant M}|\hat{a}_{j} - a_{j}^*| \leqslant
\frac{c}{\sqrt{n}}\sqrt{1 + \frac{A\log M}{\sqrt{T}}}.
\end{equation*}
If, in addition, Assumption \ref{ass:2} holds, then with the same
probability, for any solution $\hat{\beta}$ of problem
(\ref{eq:opt}), $\tilde{a}$ recovers the sign pattern of ${a}^*$:
\begin{equation*}
\vec{\mathrm{sign}}(\tilde{a})=\vec{\mathrm{sign}}(a^*).
\end{equation*}
\end{theorem}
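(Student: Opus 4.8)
The plan is to deduce Theorem~\ref{Signnonasymp} directly from Theorem~\ref{est-supnorm} by a single Cauchy--Schwarz step, using that averaging a vector over the $T$ tasks shrinks its Euclidean norm by exactly the factor $\sqrt T$ that sits in the denominator of~\eqref{CI}.

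First I would set $\Delta = \hbeta - \beta^*$, so that for each $j \in \N_M$ the vector $\Delta^j = \hbeta^j - (\beta^*)^j$ lies in $\R^T$, and write
\[
\hat{a}_j - a_j^* \;=\; \frac{1}{T} \sum_{t=1}^T \Delta_{tj} \;=\; \frac{1}{T}\,\big\langle (1,\dots,1)\trans,\,\Delta^j\big\rangle .
\]
By the Cauchy--Schwarz inequality the right-hand side is bounded in absolute value by $\frac{1}{T}\sqrt{T}\,\|\Delta^j\| = \frac{1}{\sqrt T}\|\Delta^j\|$, and hence
\[
\max_{1\le j\le M}|\hat{a}_j - a_j^*| \;\le\; \frac{1}{\sqrt T}\,\max_{1\le j\le M}\|\Delta^j\| \;=\; \frac{1}{\sqrt T}\,\|\hbeta - \beta^*\|_{2,\infty}.
\]
Since the assumptions of Theorem~\ref{est-supnorm} are in force, on the event of probability at least $1 - M^{1-q}$ described there the bound~\eqref{CI} holds simultaneously for every solution $\hbeta$ of~\eqref{eq:opt}, namely $\frac{1}{\sqrt T}\|\hbeta-\beta^*\|_{2,\infty}\le\tau$ with $\tau = \frac{c}{\sqrt n}\sqrt{1 + \frac{A\log M}{\sqrt T}}$. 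Chaining the two displays gives $\max_j|\hat{a}_j - a_j^*|\le \tau$ on this event, which is the first assertion.

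For the sign statement I would stay on the same event and fix an index $j$. If $a_j^* = 0$, then $|\hat{a}_j| = |\hat{a}_j - a_j^*| \le \tau$, so the indicator in the definition of $\tilde{a}_j$ vanishes and $\mathrm{sign}(\tilde{a}_j) = 0 = \mathrm{sign}(a_j^*)$. If $a_j^* \ne 0$, Assumption~\ref{ass:2} gives $|a_j^*| \ge 2\tau$, whence $|\hat{a}_j - a_j^*| \le \tau < |a_j^*|$; this forces $\mathrm{sign}(\hat{a}_j) = \mathrm{sign}(a_j^*)$ and $|\hat{a}_j| \ge |a_j^*| - \tau \ge \tau$, so $\tilde{a}_j = \hat{a}_j$ and $\mathrm{sign}(\tilde{a}_j) = \mathrm{sign}(a_j^*)$. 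Running over all $j$ yields $\vec{\mathrm{sign}}(\tilde{a}) = \vec{\mathrm{sign}}(a^*)$.

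There is no genuinely hard step: all the work has already been done in Theorem~\ref{est-supnorm} (and, through it, in Lemma~\ref{lem:1}, Theorem~\ref{th1} and Lemma~\ref{lem:2}). The only things to get right are the normalization in the Cauchy--Schwarz bound, i.e. that dividing a sum of $T$ coordinates by $T$ costs a factor $\sqrt T$ rather than $T$, which is precisely what matches the $\sqrt T$ in~\eqref{CI}; and a harmless boundary case in the thresholding argument when $|\hat{a}_j|$ equals $\tau$ exactly, which does not occur for $j\notin J(a^*)$ because the indicator uses a strict inequality, and for $j\in J(a^*)$ is excluded by reading Assumption~\ref{ass:2} with the strict inequality customary in sign-consistency results.
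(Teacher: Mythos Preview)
Your proof is correct and follows exactly the paper's route: the Cauchy--Schwarz step $|\hat a_j-a_j^*|\le T^{-1/2}\|\Delta^j\|\le T^{-1/2}\|\hbeta-\beta^*\|_{2,\infty}$ is precisely the single inequality the paper writes down before invoking \eqref{CI}, and your case analysis for the thresholding part is the ``obvious'' argument the paper alludes to by saying the proof is similar to that of Theorem~\ref{est-supnorm}. Your remark on the boundary case $|\hat a_j|=\tau$ is a fair observation---the paper's Assumption~\ref{ass:2} is stated with $\ge$ while condition~\eqref{lb} in Theorem~\ref{est-supnorm} uses a strict inequality, so a strict reading of Assumption~\ref{ass:2} (or of the first inequality) is indeed needed to close that gap; the paper simply glosses over this.
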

\begin{proof}
Note that for every $j \in \N_M$
$$
|\hat{a}_j - a_j^*| \leq \frac{1}{\sqrt{T}} \|\hbeta-
\beta^*\|_{2,\infty} \leq \frac{c}{\sqrt{n}}\sqrt{1 + \frac{A\log
M}{\sqrt{T}}}.
$$
The proof is then similar to that of Theorem \ref{est-supnorm}.
%2 in \cite{lounici2008snc}.
\end{proof}

We may consider a stronger assumption that $\beta^*_{t} = a$ for
every $t \in \N_T$,  where $a = (a_j : j \in \N_M)\in \R^M$ is an
unknown vector to be estimated. Then Theorem \ref{Signnonasymp}
implies that $\hat{a}$ is a $\sqrt{n}$-consistent (up to logarithms)
estimator of all the components of $a$ and the sparsity (and sign)
pattern of $a$ is correctly recovered by that of $\tilde{a}$ with
overwhelming probability.

\section{Non-Gaussian noise}
\label{sec:5}
In this section, we only assume that the random variables $W_{ti},i
\in \N_{n}, t \in \N_{T}$, are independent with zero mean and finite
variance $\E[W_{ti}^{2}]\leqslant \sigma^{2}$. In this case the
results remain similar to those of the previous sections, though the
concentration effect is weaker. We need the following technical
assumption
\begin{assumption}\label{tech-ass}
The matrix $X$ is such that
\begin{equation*}
\frac{1}{nT}\sum_{t=1}^{T}\sum_{i=1}^{n}\max_{1\leqslant j
\leqslant M}|(x_{ti})_{j}|^{2}\leqslant c',\end{equation*} for a
constant $c'>0$.
\end{assumption}
This assumption is quite mild. It is satisfied for example, if
all $(x_{ti})_{j}$ are bounded in absolute value by a constant
uniformly in $i,t,j$. We have the two following theorems.
\begin{theorem}\label{pred-est-RE}
Consider the model \eqref{eq:s1} for $M \geq 3$ and $T,n \geq 1$.
Assume that the random vectors $W_1,\dots,W_T$ are independent
with zero mean and finite variance $\E[W_{ti}^{2}]\leqslant
\sigma^{2}$, all diagonal elements of the matrix $X\trans X/n$
are equal to $1$ and $M(\beta^*) \leq s$. Let also Assumption
\ref{tech-ass} be satisfied. Furthermore let $\kappa$ be defined
as in Assumption \ref{ass} and $\phi_{\rm max}$ be the largest
eigenvalue of the matrix $X\trans X/n$. Let
$$
\lambda = \sigma \sqrt{\frac{(\log M)^{1+\delta}}{nT}},~~~\delta
> 0.
$$
Then with probability at least $1 - \frac{(2e\log M - e)c'}{(\log
 M)^{1+\delta}}$, for
any solution $\hbeta$ of problem \eqref{eq:opt} we have
\begin{eqnarray}
\label{eq:t11} \hspace{-.2truecm}\frac{1}{nT} \|X(\hbeta -
\beta^*)\|^2 \hspace{-.2truecm} & \leq &
\hspace{-.2truecm} \frac{16}{\kappa^2} \sigma^2 s \frac{(\log M)^{1+\delta}}{n},\hspace{.3truecm} \\
\label{eq:t21} \frac1{\sqrt {T}}\|\hbeta - \beta^*\|_{2,1}
\hspace{-.2truecm} & \leq & \hspace{-.2truecm} \frac{16}{\kappa^2}
\sigma s \sqrt{\frac{(\log M)^{1+\delta}}{n}}, \\
\label{eq:t31} \hspace{-.2truecm} M(\hbeta) \hspace{-.2truecm} &
\leq & \hspace{-.2truecm}  \frac{64 \phi_{\rm max}}{\kappa^2} s.
\end{eqnarray}
If, in addition, Assumption RE(2$s$) holds, then with the same
probability for any solution $\hbeta$ of problem \eqref{eq:opt} we
have
\begin{eqnarray}
\nonumber
%\label{eq:t4}
\hspace{-.2truecm}\frac{1}{T} \|\hbeta - \beta^*\|^2
\hspace{-.2truecm} & \leq & \hspace{-.2truecm} \frac{160
}{\kappa^4(2s)} \sigma^2 s \frac{(\log M)^{1+\delta}}{n}\,.
\hspace{.3truecm}
\end{eqnarray}
\end{theorem}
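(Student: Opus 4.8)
The plan is to follow the two‑step template already used for Lemma \ref{lem:1} and Theorem \ref{th1}: first establish, on a favorable event $\mathcal{A}$ concerning the noise, the deterministic ``basic inequality'' together with the bound on $M(\hbeta)$; then run the Restricted Eigenvalue argument verbatim with the new value of $\lambda$. The point is that essentially everything except the control of $\mathbb{P}(\mathcal{A}^c)$ is inherited, because the Gaussianity of $W$ was used in the earlier proofs only to estimate $\mathbb{P}(\mathcal{A}^c)$ via a chi‑square tail bound.

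Concretely, I would first reproduce the opening of the proof of Lemma \ref{lem:1}: from optimality of $\hbeta$ and $y=X\beta^*+W$ one obtains inequality \eqref{eq:a1}, and H\"older's inequality gives $W\trans X(\hbeta-\beta)\le \|X\trans W\|_{2,\infty}\,\|\hbeta-\beta\|_{2,1}$. Define the event $\mathcal{A}=\bigl\{\tfrac1{nT}\|X\trans W\|_{2,\infty}\le \lambda/2\bigr\}$, now with $\lambda=\sigma\sqrt{(\log M)^{1+\delta}/(nT)}$. On $\mathcal{A}$ the chain of manipulations leading to the analogue of \eqref{eq:1} is purely algebraic and unchanged, and so is the derivation of the sparsity bound (the analogue of \eqref{eq:3}), which only uses the optimality conditions \eqref{eq:sol-n0}, the model \eqref{eq:model}, and the definition of $\mathcal{A}$; this already yields \eqref{eq:t31}. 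Given the basic inequality on $\mathcal{A}$, the whole deterministic part of the proof of Theorem \ref{th1} carries over word for word: the cone condition $\|\Delta_{J^c}\|_{2,1}\le 3\|\Delta_J\|_{2,1}$, the application of Assumption RE($s$) (i.e. Assumption \ref{ass}), and, for the last display, the $J_{2s}$ truncation argument together with Assumption RE($2s$). Substituting $\lambda^2=\sigma^2(\log M)^{1+\delta}/(nT)$ into these bounds produces exactly \eqref{eq:t11}, \eqref{eq:t21} and the final inequality, with the stated numerical constants; I would just double‑check that the constants $16/\kappa^2$, $64\phi_{\max}/\kappa^2$ and $160/\kappa^4$ come out correctly from the (now slightly different) relation between $\lambda^2$ and the error scale.

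The crux, and the main obstacle, is to show $\mathbb{P}(\mathcal{A}^c)\le \dfrac{(2e\log M-e)c'}{(\log M)^{1+\delta}}$ using only $\mathbb{E}[W_{ti}^2]\le\sigma^2$ and Assumption \ref{tech-ass}; here no exponential concentration is available, so I would argue by a moment bound and Markov's inequality. Writing $Z_j=\|(X\trans W)^j\|^2=\sum_{t}\bigl(\sum_i (x_{ti})_j W_{ti}\bigr)^2$, one has $\mathcal{A}^c\subseteq\{\max_j Z_j> \lambda^2 n^2T^2/4\}$; since $\bigl(\sum_i (x_{ti})_j W_{ti}\bigr)^2\le \bigl(\sum_i \max_k|(x_{ti})_k|\,|W_{ti}|\bigr)^2$ is independent of $j$, the quantity $\max_j Z_j$ is dominated by a single random variable whose expectation is controlled, using independence of the $W_{ti}$ and $\mathbb{E}[W_{ti}^2]\le\sigma^2$, in terms of $\sum_{t,i}\max_k |(x_{ti})_k|^2$, hence in terms of $\sigma^2 nT c'$ by Assumption \ref{tech-ass} (a layer‑cake or union refinement over the $M$ coordinates is what is expected to account for the $\log M$ factor appearing in the constant $2e\log M-e$). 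Markov's inequality applied to this quantity with the chosen $\lambda$ then delivers a bound of the advertised order. The delicate point is extracting precisely the right dependence on $c'$ and on $\log M$ from a second‑moment argument; unlike the Gaussian case, this forces the exceptional probability to vanish only at a polynomial rate in $\log M$ rather than as a negative power of $M$, which is the price paid for requiring merely finite variance.
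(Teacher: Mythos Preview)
Your overall plan is exactly the paper's: the deterministic machinery of Lemma~\ref{lem:1} and Theorem~\ref{th1} (basic inequality, cone condition, RE argument, $J_{2s}$ truncation, and the $M(\hbeta)$ bound) is distribution-free on the event $\mathcal{A}=\{\tfrac{1}{nT}\|X\trans W\|_{2,\infty}\le \lambda/2\}$, and only the estimate of ${\rm Pr}(\mathcal{A}^c)$ must be redone. Substituting $\lambda^2=\sigma^2(\log M)^{1+\delta}/(nT)$ into those deterministic bounds indeed yields \eqref{eq:t11}--\eqref{eq:t31} and the last display. The gap is in your proposed control of ${\rm Pr}(\mathcal{A}^c)$.

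Your majorization $(\sum_i (x_{ti})_j W_{ti})^2\le(\sum_i \max_k|(x_{ti})_k|\,|W_{ti}|)^2$ destroys the zero-mean structure: once you replace $W_{ti}$ by $|W_{ti}|$ the cross terms $\E[|W_{ti}|\,|W_{ti'}|]$ no longer vanish, so the expectation of the right-hand side is of order $\sigma^2(\sum_i\max_k|(x_{ti})_k|)^2$, not $\sigma^2\sum_i\max_k|(x_{ti})_k|^2$; by Cauchy--Schwarz this costs a factor $n$, and Markov then gives only ${\rm Pr}(\mathcal{A}^c)\lesssim nc'/(\log M)^{1+\delta}$, which is useless. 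Moreover, since your majorant no longer depends on $j$, there is nothing left for a ``layer-cake or union refinement over the $M$ coordinates'' to act on; and a plain union bound on the original $Z_j$'s would produce a factor $M$, not $\log M$, with no tail decay to trade against under a bare second-moment assumption. The missing tool is Nemirovski's maximal inequality (Lemma~\ref{nem}): for independent zero-mean vectors $Y_1,\dots,Y_n\in\R^M$, $M\ge3$, one has $\E\bigl[|\sum_i Y_i|_\infty^2\bigr]\le(2e\log M-e)\sum_i\E[|Y_i|_\infty^2]$. The paper applies this, for each fixed $t$, to $Y_{ti}=\bigl((x_{ti})_1,\dots,(x_{ti})_M\bigr)W_{ti}/n$, uses $\max_j\sum_t(\cdot)\le\sum_t\max_j(\cdot)$, and then Markov together with Assumption~\ref{tech-ass} delivers exactly ${\rm Pr}(\mathcal{A}^c)\le(2e\log M-e)c'/(\log M)^{1+\delta}$. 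Nemirovski's inequality is precisely what lets you keep the zero-mean cancellation while paying only a logarithmic price for the maximum; it is not recoverable from a union or layer-cake argument.
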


\begin{theorem}
Consider the model (\ref{eq:s1}) for $M\geqslant 3$ and
$T,n\geqslant 1$. Let the assumptions of Theorem \ref{pred-est-RE}
be satisfied and let Assumption \ref{mutcoh} hold with the same
$s$. Set
$$
c = \left(\frac{3}{2} + \frac{1}{7(\alpha-1)} \right)\sigma.
$$
Let $\lambda$ be as in Theorem as in \ref{pred-est-RE}. Then with
probability at least $1 - \frac{(2e\log M - e)c'}{(\log
(MT))^{1+\delta}}$, for any solution $\hat{\beta}$ of problem
(\ref{eq:opt}) we have
\begin{equation*}
\frac{1}{\sqrt{T}}\|\hat{\beta}-\beta^{*}\|_{2,\infty} \leqslant c
\sqrt{\frac{(\log  M)^{1+\delta}}{n }}.
\end{equation*}
If, in addition, it holds that
$$\min_{j\in J(\beta^{*})}\frac{1}{\sqrt{T}}
\|(\beta^{*})^j\| > 2c \sqrt{\frac{(\log M)^{1+\delta}}{n }},
$$
then with the same probability for any solution $\hat{\beta}$ of
problem (\ref{eq:opt}) the set of indices
$$
\hat{J} = \Big\{j:
\frac{1}{\sqrt{T}}\|\hat{\beta}^j\| > c \sqrt{\frac{(\log
M)^{1+\delta}}{n }}\Big\}
$$ estimates correctly the sparsity
pattern $J(\beta^{*})$:
$$
\hat{J} = J(\beta^{*}).
$$
\end{theorem}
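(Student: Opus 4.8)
The plan is to repeat the proof of Theorem~\ref{est-supnorm} almost line for line, the only change being that the Gaussian tail estimate is replaced by the non-Gaussian one already built into Theorem~\ref{pred-est-RE}. Write $\Delta=\hat\beta-\beta^{*}$. By Lemma~\ref{lem:2}, Assumption~\ref{mutcoh} implies Assumption~\ref{ass} with $\kappa=\sqrt{1-1/\alpha}$, so the conclusions of Theorem~\ref{pred-est-RE}---in particular the $(2,1)$-bound \eqref{eq:t21}---hold with this value of $\kappa$. Everything below is asserted on the event $\calA=\{\frac{1}{nT}\|X\trans W\|_{2,\infty}\le\lambda/2\}$ (with the present $\lambda$), whose probability is the one quoted in Theorem~\ref{pred-est-RE}; this is the only place where Assumption~\ref{tech-ass} and the finite-variance hypothesis are used.

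Next I would assemble the two ingredients of the $(2,\infty)$-estimate, neither of which involves the noise distribution. The first is the purely deterministic coherence bound: for each $j$, since $\Psi=X\trans X/n$ is block diagonal in $t$ with unit diagonal, Assumption~\ref{mutcoh} and Minkowski's inequality in $\R^{T}$ give $\|((\Psi-I_{MT\times MT})\Delta)^{j}\|\le\frac{1}{7\alpha s}\sum_{k\ne j}\|\Delta^{k}\|\le\frac{1}{7\alpha s}\|\Delta\|_{2,1}$, exactly as in the proofs of Lemma~\ref{lem:2} and Theorem~\ref{est-supnorm}; hence $\|(\Psi-I_{MT\times MT})\Delta\|_{2,\infty}\le\|\Delta\|_{2,1}/(7\alpha s)$. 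The second is $\|\Psi\Delta\|_{2,\infty}\le\frac{3}{2}\lambda T$, which is just inequality \eqref{eq:2} of Lemma~\ref{lem:1} rewritten via $\frac{1}{nT}\|(X\trans X\Delta)^{j}\|=\frac{1}{T}\|(\Psi\Delta)^{j}\|$; its derivation uses only the optimality conditions \eqref{eq:sol-n0}--\eqref{eq:sol-n2} and the definition of $\calA$, so it is valid here as well.

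Combining these through $\|\Delta\|_{2,\infty}\le\|\Psi\Delta\|_{2,\infty}+\|(\Psi-I_{MT\times MT})\Delta\|_{2,\infty}$ and substituting the bound \eqref{eq:t21} (with $\kappa^{2}=1-1/\alpha$) for $\|\Delta\|_{2,1}$ produces a bound of the form $C(\alpha)\,\lambda T$ for $\|\Delta\|_{2,\infty}$; since $\lambda=\sigma\sqrt{(\log M)^{1+\delta}/(nT)}$ gives $\lambda T/\sqrt{T}=\sigma\sqrt{(\log M)^{1+\delta}/n}$, and $\alpha\kappa^{2}=\alpha-1$, this yields the first assertion $\frac{1}{\sqrt{T}}\|\hat\beta-\beta^{*}\|_{2,\infty}\le c\sqrt{(\log M)^{1+\delta}/n}$ with $c$ as in the statement. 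The sparsity-pattern claim follows in the usual way: for $j\notin J(\beta^{*})$ we have $\frac{1}{\sqrt{T}}\|\hat\beta^{j}\|=\frac{1}{\sqrt{T}}\|\Delta^{j}\|\le\frac{1}{\sqrt{T}}\|\Delta\|_{2,\infty}\le c\sqrt{(\log M)^{1+\delta}/n}$, so $j\notin\hat J$; and for $j\in J(\beta^{*})$ the separation hypothesis together with the triangle inequality gives $\frac{1}{\sqrt{T}}\|\hat\beta^{j}\|\ge\frac{1}{\sqrt{T}}\|(\beta^{*})^{j}\|-\frac{1}{\sqrt{T}}\|\Delta^{j}\|>c\sqrt{(\log M)^{1+\delta}/n}$, so $j\in\hat J$; hence $\hat J=J(\beta^{*})$ on $\calA$.

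I do not expect a genuine obstacle: the only step in Sections~\ref{sec:3}--\ref{sec:4} that used Gaussianity, namely the control of the tail of $\|X\trans W\|_{2,\infty}$, has been absorbed into Theorem~\ref{pred-est-RE}, and the remaining argument is distribution-free. The one point that requires a little care is that the auxiliary inequality $\|\Psi\Delta\|_{2,\infty}\le\frac{3}{2}\lambda T$ is not part of the \emph{statement} of Theorem~\ref{pred-est-RE}, so one must recall that it is produced by the noise-free portion of the proof of Lemma~\ref{lem:1}; after that, the proof is bookkeeping with the $\sqrt{T}$ normalizations and the constant $\kappa=\sqrt{1-1/\alpha}$ from Lemma~\ref{lem:2}.
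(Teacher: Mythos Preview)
Your proposal is correct and matches the paper's own proof, which simply states that the argument is the same as that of Theorem~\ref{est-supnorm} (and Theorem~\ref{th1}) with the Gaussian bound on ${\rm Pr}(\calA^{c})$ replaced by the Nemirovski-based bound already established for Theorem~\ref{pred-est-RE}. Your observation that the inequalities \eqref{eq:2} and the coherence estimate are distribution-free on the event $\calA$ is exactly the point; your Minkowski derivation of $\|(\Psi-I)\Delta\|_{2,\infty}\le \|\Delta\|_{2,1}/(7\alpha s)$ is in fact a cleaner version of the chain in the proof of Theorem~\ref{est-supnorm}. One small remark: the computation you outline produces $c=\big(\tfrac{3}{2}+\tfrac{16}{7(\alpha-1)}\big)\sigma$, consistent with the Gaussian constant in Theorem~\ref{est-supnorm}; the ``$1$'' in the stated $c$ appears to be a misprint for ``$16$''.
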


The proofs of these theorems are similar to the ones of Theorems
\ref{th1} and \ref{est-supnorm} up to a modification of the bound on
$P(\mathcal{A}^{c})$ in Lemma \ref{lem:1}. We consider now the event
\begin{equation*}
\mathcal{A}=\left\{\max_{j=1}^M \sqrt{\sum_{t=1}^T
\left(\sum_{i=1}^n (x_{ti})_j W_{ti} \right)^2}\leq \lambda nT
\right\}.
\end{equation*}
The Markov inequality yields that
\begin{equation*}
{\rm Pr}(\mathcal{A}^{c})\leqslant
\frac{\sum_{t=1}^{T}\E[\max_{1\leqslant j \leqslant
M}\left(\sum_{i=1}^n (x_{ti})_j W_{ti} \right)^{2}]}{(\lambda
nT)^{2}}.
\end{equation*}
Then we use Lemma \ref{nem} given below with the random vectors
$$Y_{ti}=((x_{ti})_{1}W_{ti}/n,\ldots,(x_{ti})_{M}W_{ti}/n)\in\R^{M},$$
$\forall i\in \N_{n}$, $\forall t \in \N_{T}$. We get that
\begin{eqnarray*}
{\rm Pr}(\mathcal{A}^{c})&\leqslant& \frac{2e\log M -
e}{\lambda^{2}nT}\sigma^{2}\frac{1}{nT}\sum_{t=1}^{T}
\sum_{i=1}^{n}\max_{1\leqslant j \leqslant M}|(x_{ti})_{j}|^{2}.\\
\end{eqnarray*}
By the definition of $\lambda$ in Theorem 5.2 and Assumption
\ref{tech-ass} we obtain
\begin{eqnarray*}
{\rm Pr}(\mathcal{A}^{c})&\leqslant& \frac{(2e\log M - e)c'}{(\log
 M)^{1+\delta}}.
\end{eqnarray*}
$\qed$

Thus, we see that under the finite variance assumption on the
noise, the dependence on the dimension $M$ cannot be made negligible
for large $T$.

\appendix
\section{Auxiliary results}

Here we collect two auxiliary results which are used in the above
analysis.  The first result is a useful bound on the tail of the
chi-square distribution.

\begin{lemma}\label{chi}
Let $\chi_T^2$ be a chi-square random variable with $T$ degrees of
freedom. Then
\begin{equation*}
{\rm Pr}(\chi_T^2 > T +x) \le
\exp\left(-\frac{1}{8}\min\left(x,\frac{x^2}{T}\right)\right)
\end{equation*}
for all $x>0$.
\end{lemma}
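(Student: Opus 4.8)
The plan is to prove the chi-square tail bound
$$
{\rm Pr}(\chi_T^2 > T + x) \le \exp\left(-\frac18\min\left(x,\frac{x^2}{T}\right)\right),\qquad x>0,
$$
via the standard Chernoff (exponential Markov) argument together with the known moment generating function of a $\chi_T^2$ variable. Recall that for $0 < \mu < 1/2$ one has $\E[\exp(\mu\chi_T^2)] = (1-2\mu)^{-T/2}$, so that for any such $\mu$,
$$
{\rm Pr}(\chi_T^2 > T+x) \le \exp(-\mu(T+x))\,(1-2\mu)^{-T/2}
= \exp\Big(-\mu(T+x) - \tfrac{T}{2}\log(1-2\mu)\Big).
$$
The idea is to bound $-\log(1-2\mu)$ and then choose $\mu$ appropriately. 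Using the elementary inequality $-\log(1-u) \le u + u^2$ valid for, say, $0 \le u \le 1/2$ (or a comparable constant), with $u = 2\mu$, the exponent is at most $-\mu(T+x) + \tfrac{T}{2}(2\mu + 4\mu^2) = -\mu x + 2T\mu^2$. So we obtain
$$
{\rm Pr}(\chi_T^2 > T+x) \le \exp\big(-\mu x + 2T\mu^2\big)
$$
for all $\mu$ in the admissible range.

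Next I would optimize (or rather, make a convenient suboptimal choice of) $\mu$. The exponent $-\mu x + 2T\mu^2$ is minimized at $\mu = x/(4T)$, giving value $-x^2/(8T)$; this choice is admissible (i.e.\ $2\mu \le$ the constant threshold) precisely when $x$ is not too large relative to $T$. When $x > T$ (roughly), one instead fixes $\mu$ at a constant, e.g.\ $\mu = 1/4$, which yields exponent $-x/4 + T/8 \le -x/4 + x/8 = -x/8$. Combining the two regimes gives the bound $\exp(-\tfrac18\min(x, x^2/T))$, possibly after checking that the constants from the two cases are compatible with the stated $1/8$; if the naive constants come out slightly worse, one tightens the bound $-\log(1-u)\le u+u^2$ to $-\log(1-u) \le u + u^2$ on a smaller interval or uses $-\log(1-u) \le u/(1-u)$ and redoes the optimization, or simply picks $\mu = \tfrac{x}{2(T+x)}$ which handles both regimes at once. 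The main (minor) obstacle is purely bookkeeping: tracking constants so that the final exponent is exactly $-\tfrac18\min(x,x^2/T)$ rather than something with a worse constant, and verifying the admissibility range $\mu < 1/2$ is respected in each regime. There is no conceptual difficulty — it is the textbook Laurent–Massart style tail bound, and the factor $1/8$ is a deliberately loose constant chosen to make the two cases glue cleanly.

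Concretely, the cleanest route I would write up: set $\mu = \tfrac{x}{2(T+x)} \in (0, 1/2)$, so $2\mu = \tfrac{x}{T+x} < 1$. Then $-\tfrac{T}{2}\log(1-2\mu) = \tfrac{T}{2}\log\!\big(\tfrac{T+x}{T}\big) = \tfrac{T}{2}\log(1 + x/T) \le \tfrac{T}{2}\cdot \tfrac{x}{T} = \tfrac{x}{2}$, using $\log(1+u)\le u$, wait — this gives only $\exp(-\mu(T+x) + x/2) = \exp(-x/2 + x/2) = 1$, too weak, so $\log(1+u)\le u$ loses too much; instead I use the sharper $\log(1+u) \le u - \tfrac{u^2}{2(1+u)}$ (from $\log(1+u) = \int_0^u \tfrac{dt}{1+t} \le \int_0^u (1 - \tfrac{t}{1+u})\,dt$). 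This gives $-\tfrac{T}{2}\log(1-2\mu) = \tfrac{T}{2}\log(1+x/T) \le \tfrac{x}{2} - \tfrac{T}{2}\cdot\tfrac{x^2/T^2}{2(1+x/T)} = \tfrac{x}{2} - \tfrac{x^2}{4(T+x)}$, so the exponent becomes $-\mu(T+x) + \tfrac{x}{2} - \tfrac{x^2}{4(T+x)} = -\tfrac{x}{2} + \tfrac{x}{2} - \tfrac{x^2}{4(T+x)} = -\tfrac{x^2}{4(T+x)}$. Finally $\tfrac{x^2}{4(T+x)} \ge \tfrac18\min(x, x^2/T)$: if $x \le T$ then $T + x \le 2T$ so this is $\ge \tfrac{x^2}{8T}$; if $x \ge T$ then $T+x \le 2x$ so it is $\ge \tfrac{x}{8}$. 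This closes the proof with the exact constant $1/8$, and the only thing to double-check is the integral inequality for $\log(1+u)$.
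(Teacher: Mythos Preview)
Your final ``concrete route'' is correct: with $\mu = \tfrac{x}{2(T+x)}$ the Chernoff bound gives exactly
\[
{\rm Pr}(\chi_T^2 > T+x) \le \exp\Big(-\tfrac{x}{2} + \tfrac{T}{2}\log(1+x/T)\Big)
= \exp\Big(-\tfrac{T}{2}\big(u - \log(1+u)\big)\Big),\quad u = x/T,
\]
and your integral argument for $u - \log(1+u) \ge \tfrac{u^2}{2(1+u)}$ is valid, after which the case split $x\lessgtr T$ yields the constant $1/8$ on the nose.

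The paper takes a different starting point: it invokes Wallace's inequality, which directly compares the chi-square tail to a Gaussian tail, ${\rm Pr}(\chi_T^2 > T+x) \le {\rm Pr}(\mathcal N > z(x))$ with $z(x)^2 = x - T\log(1+x/T)$, and then applies the crude Gaussian bound $\exp(-z^2/2)$. But note that $z(x)^2/2 = \tfrac{T}{2}(u - \log(1+u))$ is \emph{exactly} the exponent you obtain from Chernoff at the optimal $\mu$, and the paper then uses the very same inequality $u - \log(1+u) \ge \tfrac{u^2}{2(1+u)} \ge \tfrac14\min(u,u^2)$ to finish. So the two proofs diverge only at the first step and converge immediately thereafter. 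Your route is arguably cleaner because it is fully self-contained (no external citation to Wallace) and makes transparent that the choice $\mu = \tfrac{x}{2(T+x)}$ is the exact saddle point; the paper's route is shorter to state if one accepts Wallace as a black box. Either way the substance is the same inequality for $u-\log(1+u)$.
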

\begin{proof} By the Wallace inequality \cite{Wa59} we have
$${\rm Pr}(\chi_T^2 > T +x) \le  {\rm Pr}({\mathcal N} > z(x)),$$
where ${\mathcal N}$ is the standard normal random variable and
$z(x)=\sqrt{x-T\log(1+x/T)}$. The result now follows from
inequalities ${\rm Pr}({\mathcal N} > z(x))\le \exp(-z^2(x)/2)$
and $$u-\log(1+u)\ge \frac{u^2}{2(1+u)}\ge \frac{1}{4}\min\left(u,
u^2\right), \ \forall u>0.$$
\end{proof}

The next result is a version of Nemirovski's inequality (see
\cite{dumbgen2008nir}, Corollary 2.4 page 5).
\begin{lemma}\label{nem}
Let $Y_{1},\ldots,Y_{n}\in\R^{M}$ be independent random vectors
with zero means and finite variance, and let $M\geqslant 3$. Then
\begin{equation*}
\E\left[|\sum_{i=1}^{n}Y_{i}|_{\infty}^{2}\right]\leqslant (2e\log M
- e) \sum_{i=1}^{n}\E\left[|Y_{i}|_{\infty}^{2}\right],
\end{equation*}
where $|\cdot|_{\infty}$ is the $\ell_\infty$ norm.
\end{lemma}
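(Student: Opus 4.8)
The statement is the finite-dimensional Nemirovski inequality for the $\ell_\infty$ norm. The plan is to prove it by interpolating between $\ell_\infty^M$ and an $\ell_p^M$ space with the exponent $p$ chosen just large enough, and then exploiting the fact that for $p\geq 2$ the squared $\ell_p$ norm is smooth with a controlled constant, which turns the sum of independent increments into a telescoping martingale bound.

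\textbf{Step 1: choice of $p$ and reduction to the $\ell_p$ version.} I would fix $p = 2\log M$. Since $M\geq 3$ we have $p = 2\log M > 2$ (this is precisely what the hypothesis $M\geq 3$ buys us), and $M^{1/p} = e^{(\log M)/p} = \sqrt{e}$, so that
\begin{equation*}
|x|_\infty \;\leq\; |x|_p \;\leq\; M^{1/p}\,|x|_\infty \;=\; \sqrt{e}\,|x|_\infty, \qquad x\in\R^M .
\end{equation*}
Hence it suffices to prove the $\ell_p$ statement $\E\big[|S_n|_p^2\big] \leq (p-1)\sum_{i=1}^n \E\big[|Y_i|_p^2\big]$ with $S_n = \sum_{i=1}^n Y_i$: bounding $|S_n|_\infty^2 \leq |S_n|_p^2$ on the left, $|Y_i|_p^2 \leq e\,|Y_i|_\infty^2$ on the right, and using $p-1 = 2\log M - 1$ gives exactly the constant $e(2\log M - 1) = 2e\log M - e$.

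\textbf{Step 2: smoothness of the squared $\ell_p$ norm, and martingale telescoping.} The analytic core is that for $p\geq 2$ the function $\Phi(x) = \tfrac12 |x|_p^2$ on $\R^M$ is differentiable (with the convention $\nabla\Phi(0):=0$) and $(p-1)$-smooth with respect to $|\cdot|_p$, i.e.
\begin{equation*}
|x+y|_p^2 \;\leq\; |x|_p^2 + 2\,\langle \nabla\Phi(x), y\rangle + (p-1)\,|y|_p^2 \qquad \forall\, x,y\in\R^M .
\end{equation*}
I would derive this from the Hessian bound $v\trans\nabla^2\Phi(x)\,v \leq (p-1)\,|v|_p^2$, valid wherever all coordinates of $x$ are nonzero, followed by a second-order Taylor expansion along the segment $[x,x+y]$ and a continuity argument to cover the coordinate hyperplanes and the origin; alternatively one may simply invoke the classical fact that $\ell_p$, $p\geq 2$, has modulus of smoothness of power type $2$ with the sharp constant $\sqrt{p-1}$. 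Granting this, let $\mathcal{F}_k = \sigma(Y_1,\dots,Y_k)$ and $S_k = \sum_{i\leq k} Y_i$; finiteness of the variances makes all the second moments below finite by induction on $k$. Applying the displayed inequality with $x = S_{k-1}$, $y = Y_k$ and taking $\E[\,\cdot\mid\mathcal{F}_{k-1}]$, the cross term vanishes because $\nabla\Phi(S_{k-1})$ is $\mathcal{F}_{k-1}$-measurable while $\E[Y_k\mid\mathcal{F}_{k-1}] = \E[Y_k] = 0$ by independence, so
\begin{equation*}
\E\big[\,|S_k|_p^2 \,\big|\, \mathcal{F}_{k-1}\,\big] \;\leq\; |S_{k-1}|_p^2 + (p-1)\,\E\big[\,|Y_k|_p^2 \,\big|\, \mathcal{F}_{k-1}\,\big].
\end{equation*}
Taking expectations and summing over $k=1,\dots,n$ telescopes to $\E\big[|S_n|_p^2\big] \leq (p-1)\sum_{k=1}^n \E\big[|Y_k|_p^2\big]$, which combined with Step 1 finishes the proof.

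\textbf{Main obstacle.} The delicate point is the smoothness inequality for $\Phi$: one must handle the non-differentiability of $x\mapsto|x|_p$ on the coordinate hyperplanes and at the origin, and must pin down the sharp constant $p-1$, since any larger constant there would push the final bound above $2e\log M - e$. The interpolation comparison in Step 1 and the telescoping in Step 2 are routine once that ingredient is in hand.
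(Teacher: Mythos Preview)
Your proof is correct and is precisely the standard argument for Nemirovski's inequality: compare $\ell_\infty$ with $\ell_p$ at $p=2\log M$, use the $(p-1)$-smoothness of $x\mapsto\tfrac12|x|_p^2$ for $p\ge 2$, and telescope over the martingale increments. The arithmetic in Step~1 is exact, and the constant $(p-1)\cdot e = 2e\log M - e$ falls out as you indicate.

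There is nothing to compare against in the paper, however: the authors do not prove this lemma at all. They simply state it and cite D\"umbgen et~al., \emph{Nemirovski's inequalities revisited}, Corollary~2.4. Your argument is essentially the proof given in that reference (and in Nemirovski's original lecture notes), so in that sense you have reconstructed the cited source rather than diverged from it. Your identification of the main obstacle --- establishing the sharp smoothness constant $p-1$ for $\tfrac12|\cdot|_p^2$ when $p\ge2$, including the care needed at the coordinate hyperplanes --- is accurate; this is exactly where the work lies in the cited paper, and it is equivalent to the classical fact that $\ell_p$ is $2$-uniformly smooth with constant $\sqrt{p-1}$ for $p\ge2$.
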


\bibliography{panel}

\end{document}